\title[Efficient Reinforcement Learning Through Trajectory Generation]{Efficient Reinforcement Learning Through Trajectory Generation}
\newcommand{\bigzero}{\mbox{\normalfont\Large\bfseries 0}}
\newcommand{\rvline}{\hspace*{-\arraycolsep}\vline\hspace*{-\arraycolsep}}
\DeclareSIUnit[]{\pu}{p.u.}
\DeclareSIUnit[]{\VA}{VA}
\DeclareSymbolFont{bbold}{U}{bbold}{m}{n}
\DeclareSymbolFontAlphabet{\mathbbold}{bbold}
\DeclarePairedDelimiterX\Set[2]{\lbrace}{\rbrace}%
{ #1 \,\delimsize| \,\mathopen{} #2 }
\newcommand{\real}[0]{\mathbb R}
\author{%
 \Name{Wenqi Cui} \Email{wenqicui@uw.edu}\\
 \addr Department of Electrical and Computer Engineering, University of Washington, WA, USA	 
 \AND
 \Name{Linbin Huang} \Email{linhuang@ethz.ch }\\
 \addr Department of Information Technology and Electrical Engineering, ETH Zurich, Switzerland
  \AND
 \Name{Weiwei Yang} \Email{weiwya@microsoft.com }\\
 \addr Microsoft Research, Redmond, WA, USA%
   \AND
 \Name{Baosen Zhang} \Email{zhangbao@uw.edu}\\
 \addr Department of Electrical and Computer Engineering, University of Washington, WA, USA	 
}
\begin{document}

\maketitle

\begin{abstract}%

A key barrier to using reinforcement learning (RL) in many real-world applications is the requirement of a large number of system interactions to learn a good control policy. Off-policy and Offline RL methods have been proposed to reduce the number of interactions with the physical environment by learning control policies from historical data. However, their performances suffer from the lack of exploration and the distributional shifts in trajectories once controllers are updated. Moreover, most RL methods require that all states are directly observed, which is difficult to be attained in many settings.

To overcome these challenges, we propose a trajectory generation algorithm, which adaptively generates new trajectories as if the system is being operated and explored under the updated control policies. Motivated by the fundamental lemma for linear systems, assuming sufficient excitation, we generate trajectories from linear combinations of historical trajectories. For linear feedback control, we prove that the algorithm generates trajectories with the exact distribution as if they were sampled from the real system using the updated control policy. In particular, the algorithm extends to systems where the states are not directly observed. Experiments show that the proposed method significantly reduces the number of sampled data needed for RL algorithms. 

\end{abstract}

\begin{keywords}%
  Reinforcement learning, trajectory generation, linear systems, distributional shifts. 
\end{keywords}

\section{Introduction}

Reinforcement learning (RL) is becoming increasingly popular for the controller design of dynamical systems, especially when the exact system model or parameters are not available~\citep{zheng2021sample,hu2022towards}. Much of the success in RL has relied on sampling-based  algorithms such as the policy gradient algorithm~\citep{Sutton2018RL, fazel2018global}, which typically requires repeated online interactions with the system.  Moreover, the control actions need to incorporate sufficient exploration for the learning algorithm to search for better policies~\citep{jin2021pessimism}. However, sampling large batch of trajectories is expensive in many real-world problems (e.g., in energy systems, robotics or healthcare), and the exploration requirement  for safety-critical  systems may be dangerous~\citep{levine2020offline, fujimoto2021minimalist}.

When the online interactions with the system are limited, two categories of RL methods are designed: off-policy RL and offline RL. 
Off-policy RL methods (e.g., Q-learning and its variants) typically learn a quality function (i.e, Q function)  leveraging  past  experience, but online interactions with explorations are still required after the update of the control policies~\citep{ghasemipour2021emaq}.
Offline RL seeks to learn from a
fixed dataset without interactions with environments~\citep{gulcehre2020rl, jin2021pessimism}. The fundamental challenge is that once the control policies have been updated, the trajectories of the system under the new policies would not have the same distribution as the historical data~\citep{fujimoto2021minimalist, ghasemipour2021emaq}. As a result,  existing algorithms typically constrain the control policy to be close with the policy utilized in the fixed dataset~\citep{ ostrovski2021difficulty, fujimoto2021minimalist}. 
Since most algorithms need to do some exploration, it is believed that past data is not helpful if high-reward regions are not covered in the  collected trajectories~\citep{levine2020offline, jin2021pessimism}. 

A fundamental reason behind the above challenges is that the training process is restricted to fixed trajectories in the historical data, hence RL algorithms need to be restricted to historical control policies. We look at the problem from the other direction: \textit{Using only historical data, can we generate trajectories that follow the same distribution induced by a new control policy?}


This paper proposes a trajectory generation algorithm for linear systems, which adaptively generates new trajectories as if the system were being operated and explored under the updated control policies.
The key insights come from  the fundamental lemma for linear systems, which shows that any set of persistently exciting trajectories can be used to represent the input-output behavior of the system~\citep{willems2005note, de2019formulas, markovsky2022identifiability}. Inspired by this, we generate trajectories from linear combinations of historical trajectories, which can come from routine operations of the system.  The set of linear combinations is derived from  the updated control policy with perturbations on actions, such that  the generated trajectory is the same as the trajectory sampled on the real system. This adaptive approach overcomes the challenges in distributional shift and lack of exploration. This is complementary to recent advances in learning linear feedback controllers for linear systems (See, for example,~\citep{fazel2018global,zheng2021sample,tang2021analysis,hu2022towards} and references within), where  trajectories are sampled through online interactions.  
Experiments
show that the proposed method significantly reduces the number of sampled data needed for RL algorithms. 
We summarize the contributions as follows:
\begin{enumerate}[noitemsep,nolistsep,leftmargin=*, label=\arabic*)]
    \item We propose a  simple end-to-end approach to generate input-output trajectories for linear systems, which significantly reduces the burden of sample collection in RL methods. In Theorem~\ref{thm: unque_trajectory0}, 
    we prove that the generated trajectories is adaptive to the distribution shift of any linear state-feedback controller with perturbations on actions for explorations. When the states are not directly observed, Theorem~\ref{thm: unque_trajectory0_partial} shows that this framework also applies to output-feedback control by defining an extended state from the observations.

  \item The proposed trajectory generation algorithm is compatible with any RL methods that learns from trajectories. The number of samples needed to learn is independent to the batch size (the number of trajectories in each episode) and the number of training episodes. 
\end{enumerate}

\section{ Preliminaries and problem formulation}
\label{sec: background}

\subsection{Notations}

Throughout this manuscript, vectors are denoted in lower case bold and matrices are denoted in upper case bold, unless otherwise specified. Vectors of all ones and zeros are denoted by $\mathbbold{1}_n, \mathbbold{0}_n \in \real^n$, respectively. The identity matrix  is denoted by $\bm{I}_n\in\real^{n\times n}$. 
We use $\mathcal{N}(\bm{A})$ to denote the null space of matrix $\bm{A}$. Given $n$ matrices $\bm{M}_i, i=1,\cdots,n$, we denote $[\bm{M}_1;\cdots;\bm{M}_n]:=[\bm{M}_1^\top \cdots \bm{M}_n^\top]^\top$.

Given a discrete-time signal   $\bm{z}(t)\in \mathbb{R}^d$ for $t=0,1,\cdots$, we 
use $\bm{z}_{[k, k+T]}\in\real^{Td}$  to denote the vector form of the sequence $\{\bm{z}(k), \ldots, \bm{z}(k+T)\}$,
and the Hankel matrix $\bm{Z}_{i, t, N}\in\real^{td\times N}$ as
\vspace{-0.1cm}
$$
\bm{Z}_{[k, k+T]}=\left[\begin{array}{c}
\bm{z}(k) \\
\vdots \\
\bm{z}(k+T)
\end{array}\right],
\bm{Z}_{i, t, N}=\left[\begin{array}{cccc}
\bm{z}(i) & \bm{z}(i+1) & \cdots & \bm{z}(i+N-1) \\
\bm{z}(i+1) & \bm{z}(i+2) & \cdots & \bm{z}(i+N) \\
\vdots & \vdots & \ddots & \vdots \\
\bm{z}(i+t-1) & \bm{z}(i+t) & \cdots & \bm{z}(i+t+N-2)
\end{array}\right],
$$
where $k,i$ and are integers, and $t, N, T $ are natural numbers. The first subscript of $\bm{Z}_{i, t, N}$ denotes the time at which the first sample of the signal is taken, the second one the number of samples per each column, and the last one the number of signal samples per each row. 

\subsection{Problem formulation}\label{subsec: problem}
We consider a  discrete-time linear time-invariant system
\vspace{-0.1cm}
\begin{equation}\label{eq:sys_full}
\begin{aligned}
\bm{x}(k+1) &=\bm{A} \bm{x}(k)+\bm{B} \bm{u}(k) \\
\bm{y}(k) &=\bm{C} \bm{x}(k)
\end{aligned}
\end{equation}
with state $\bm{x} \in \mathbb{R}^n$, action $\bm{u} \in \mathbb{R}^m$, and output  $\bm{y} \in \mathbb{R}^q$ (sometimes called observations in RL literature).  If $\bm{C}$ is not of full (column) rank, we say that the states are not directly observed. Otherwise, we assume $\bm{C}=\bm{I}_n$ and the states are directly observed.  We assume that $\bm{A}$ and $\bm{B}$ are not known. The matrix $\bm{C}$ is also unknown if the state is not directly observed. We also make the standard assumption that $\left(\bm{A},\bm{B}\right)$ is stabilizable and $\left(\bm{A},\bm{C}\right)$ observable~\citep{hespanha2009linear}. 

A trajectory is a sequence of observations and actions of length $T$, given by $\bm{\tau}=\{\bm{y}(0), \bm{u}(0), \cdots \\ ,  \bm{y}(T-1), \bm{u}(T-1)\}$. The control action $\bm{u}(k)$ most commonly comes from the control policy conditioned on the observation at time $k$, written as  $\bm{\pi}_{\bm{\theta}}\left(\bm{u}(k) \mid \bm{y}(k)\right)$ with $\bm{\theta}$ being the parameter for the control policy. Let $c\left(\bm{\tau}\right)$ be the cost defined over the trajectory $\bm{\tau}$.  The goal is to optimize the control paramater $\bm{\theta}$ to minimize the expected cost over  trajectories, written as:
\vspace{-0.1cm}
\begin{equation}\label{eq:Opt_obj}
J(\bm{\theta})=\mathbb{E}_{\bm{\tau} \sim p_{\bm{\pi}_{\bm{\theta}}}}c\left(\bm{\tau}\right) ,    
\end{equation}
where $p_{\bm{\pi}_{\bm{\theta}}}$ is the probability distribution of  trajectory  subject to the policy $\bm{\pi}_{\bm{\theta}}$.
The definition of $c\left(\bm{\tau}\right)$ varies for different problems. 
For linear control policy, quadratic costs are most commonly utilized to convert the optimization into classical linear quadratic problems~\citep{fazel2018global,zheng2021sample}. There are typically not closed-form solutions for~\eqref{eq:Opt_obj} for other cost functions, e.g., $c\left(\bm{\tau}\right)=\sum_{i=1}^n\max_{k=0,\cdots,K-1}|x_i(k)|$ for the frequency control problem in power systems~\citep{cui2022tps}. In this case, gradient-based methods can be utilized to update $\bm{\theta}$, but the lack of system parameters makes it difficult to compute the gradient $ \nabla_{\bm{\theta}} J(\bm{\theta})$.




\paragraph{Direct policy gradient methods.} 
Reinforcement learning (RL) is proposed to update $\bm{\theta}$ through gradient descent, where the gradient $ \nabla_{\bm{\theta}} J(\bm{\theta})$ is  approximated  from trajectories of the system under the control policy $\bm{\pi}_{\bm{\theta}}$. 
For example, the policy gradient methods in~\citep{Sutton2018RL} shows that the gradient $ \nabla_{\bm{\theta}} J(\bm{\theta})$  can be equivalently computed by
\vspace{-0.1cm}
\begin{equation}\label{eq: PG_grad}
    \nabla_{\bm{\theta}} J(\bm{\theta}) = \mathbb{E}_{\bm{\tau} \sim p_{\bm{\pi}_{\bm{\theta}}}(\bm{\tau})}\left[c(\bm{\tau})\sum_{k=0}^{K-1} \nabla_{\bm{\theta}} \log p_{\bm{\pi}_{\bm{\theta}}}\left(\bm{u}(k) \mid \bm{y}(k)\right)\right], 
\end{equation}
where $K$ is the length of the trajectory. The terms inside the expectation can be computed purely from observations and actions in the trajectory.

If $\bm{C}=\bm{I}_n$ in~\eqref{eq:sys_full}, the states are directly measured and the control law is called state-feedback control. Otherwise, the control law is called output-feedback control. 
Note that if $\bm{C}$ is not full column rank, then $(\bm{y}(t),\bm{u}(t))$ cannot uniquely determine $\bm{y}(t+1)$. Namely, the system is not a Markov decision process with respect to $(\bm{y}(t),\bm{u}(t))$ and it is difficult to use generic RL algorithms based on the quality function $Q((\bm{y}(t),\bm{u}(t))$~\citep{jin2020sample}.
For illustration purpose, this paper focus on the policy gradient algorithm~\eqref{eq: PG_grad}, and we consider both state and output feedback control. 


\vspace{-0.1cm}
\subsection{Approximate gradients with sampled trajectories} 
When online interactions with the system are limited, computing the expectation in~\eqref{eq: PG_grad} is not trivial even when the states are directly observed.
In training, the expectation in~\eqref{eq: PG_grad} is approximated by the sample average over a large number of trajectories $\bm{\tau}$ collected from the system under the control policy $\bm{\pi}_{\bm{\theta}}$. Since the distribution $p_{\bm{\pi}_{\bm{\theta}}}(\bm{\tau})$ depends on the parameter $\bm{\theta}$, a new set of trajectories need to be collected after each iteration of updating $\bm{\theta}$. Thus, the number of samples increases with the batchsize (i.e., the number of trajectories in each episode) and the number of training episodes.

We seek to update the control policy using  historical trajectories and thus do not need to interact with the system during  training.  Two challenges arise: \textit{(i) Distribution Shift.}  If the control policy changes, the distribution of 
the historical trajectories would be different from the true distribution $p_{\bm{\pi}_{\bm{\theta}}}(\bm{\tau})$, potentially resulting in large errors when computing~\eqref{eq: PG_grad}. 
\textit{(ii) Exploration.} Most RL methods need to add (sometimes large) perturbations on actions to encourage exploration, but training with a fixed set of historical trajectories may limit exploration. 


\paragraph{End-to-End Trajectory Generation.} We propose to overcome the challenges of distribution shift and the lack of exploration through generating trajectories from historical data. In this paper, we focus on learning linear feedback control law $\bm{u}(k)=\bm{\theta} \bm{y}(k)$, with $\bm{\theta}\in\real^{m\times q}$ being the matrix of trainable parameters. For exploration, 
the action during training follows the control policy with perturbations $\bm{w}(k)$ as additive noise, written as
\vspace{-0.1cm}
\begin{equation}\label{eq:control_policy}
\bm{\pi}_{\bm{\theta}}\left(\bm{u}(k) \mid \bm{y}(k)\right):=\left\{\bm{u}(k)=\bm{\theta} \bm{y}(k)+\bm{w}(k), \bm{w}(k)\sim\mathcal{D}\right\},
\end{equation}
where $\mathcal{D}$ is the prescribed distribution for the perturbations. The variance of $\mathcal{D}$  is typically initialized to be large and then shrink with the training episode to achieve  exploration v.s. exploitation.

We provide a simple end-to-end approach to generate input-output trajectories without system identification, allowing it to  extend to systems where the states are not directly observed (i.e., when $\bm{C}$ is not full column rank). The generated trajectories  are guaranteed to  have the same distribution as $p_{\bm{\pi}_{\bm{\theta}}}(\bm{\tau})$ 
for all $\bm{\theta}$ and $\mathcal{D}$.  We first show the trajectory generation algorithm for state-feedback control in Section~\ref{sec:TrajGen_FullState}, then generalize the results to output-feedback control in Section~\ref{sec: TrajGen_PartialState}.

\section{Trajectory Generation for State-Feedback Control}
\label{sec:TrajGen_FullState}
In this section, we show the conditions when the linear combination of historic trajectories spans all possible trajectories. On this basis, we derive the algorithm to generate trajectories corresponding to updated control policies with   perturbations on actions for explorations.
\subsection{Span of historic trajectories}


Let $\bm{u}_{d,[0, L-1]}$ and $\bm{x}_{d,[0, L-1]}$ be the a length-$L$ input and state from past trajectory, and let the corresponding Hankel matrix $\mathcal{H}\in\real^{(Tm+Tn)\times (L-T+1})$ defined as
\vspace{-0.1cm}
\begin{equation}\label{eq:Hankel_u_x}
    \begin{split}
    &\underbrace{ 
    \begin{bmatrix}
\bm{U}_{0, T, L-T+1}  \\
\bm{X}_{0, T, L-T+1}     
    \end{bmatrix}
}_{\mathcal{H}}\quad
:=
\left[\begin{matrix}
\bm{u}_d(0) & \bm{u}_d(1) & \cdots & \bm{u}_d(L-T) \\
\vdots & \vdots & \ddots & \vdots \\
\bm{u}_d(T-1) & \bm{u}_d(T) & \cdots & \bm{u}_d(L-1) \\
\hline \bm{x}_d(0) & \bm{x}_d(1) & \cdots & \bm{x}_d(L-T) \\
\vdots & \vdots & \ddots & \vdots \\
\bm{x}_d(T-1) & \bm{x}_d(T) & \cdots & \bm{x}_d(L-1)
\end{matrix}\right].    
    \end{split}
\end{equation}

By the state-space version of  Fundamental Lemma~\citep{de2019formulas} shown below, any linear combination of the columns of the Hankel matrix is a length-$T$ input-state trajectory of~\eqref{eq:sys_full}. The proof is provided in~\citep{de2019formulas} and we supplement it in Appendix~\ref{app: fundemental}  for completeness.
 

\vspace{-0.1cm}
\begin{lemma}[ Fundamental Lemma]\label{lem:fundemental}
If $\text{rank}\left[\begin{smallmatrix}
\bm{U}_{0, T, L-T+1} \\
\bm{X}_{0, L-T+1}
\end{smallmatrix}\right]=n+Tm$,
then any length-$T$ input/state trajectory of system~\eqref{eq:sys_full} can be expressed as
$\left[\begin{smallmatrix}
\bm{u}_{[0, T-1]} \\
\bm{x}_{[0, T-1]}
\end{smallmatrix}\right]=\left[\frac{\bm{U}_{0, T, L-T+1}}{\bm{X}_{0, T, L-T+1}}\right] \bm{g}$,
where $\bm{g}\in \mathbb{R}^{L-T+1}$.

\end{lemma}

For the rank condition in Lemma~\ref{lem:fundemental} to hold, the minimum requirement on the length of the collected trajectory is $L-T+1=n+Tm$, namely, $L=(m+1)T-1+n$. 
When the rank condition holds, linear combination of the columns of the Hankel matrix is also a length-$T$ trajectory of the system. Thus, we generate a trajectory of length $T$ using
\vspace{-0.1cm}
\begin{equation}\label{eq:Hankel_g0}
\begin{bmatrix}
\Tilde{\bm{u}}(0);
\cdots;
\Tilde{\bm{u}}(T-1);
\Tilde{\bm{x}}(0);
\cdots;
\Tilde{\bm{x}}(T-1)
\end{bmatrix}
=   
   \underbrace{ 
    \begin{bmatrix}
\bm{U}_{0, T, L-T+1}  \\
\bm{X}_{0, T, L-T+1}     
    \end{bmatrix}
}_{\mathcal{H}}\bm{g}.
\end{equation}


For convenience, we adopt the notation $\mathcal{H}_u=\bm{U}_{0, T, L-T+1} $, $\mathcal{H}_x=\bm{X}_{0, T, L-T+1} $ in the following sections. To represent the rows of blocks starting from the time $k=0,\cdots,T-1$, we denote
\vspace{-0.1cm}
\begin{equation}\label{eq:H_u_k}
\mathcal{H}_x^k :=
[
\bm{x}_d(k) \; \bm{x}_d(k+1)\,  \cdots\,  \bm{x}_d(L-T+k)
],\,
\mathcal{H}_u^k :=
[
\bm{u}_d(k) \; \bm{u}_d(k+1) \, \cdots \, \bm{u}_d(L-T+k)
].  
\end{equation}



\subsection{Trajectory generation}

For generic RL algorithms, a trajectory is sampled from the system that starts from an initial state $\bm{x}(0)$ and subsequently implements the control policy  $\bm{u}(k)=\bm{\theta} \bm{x}(k)+\bm{w}(k), \bm{w}(k)\sim\mathcal{D}$  for $k=0,\cdots,T-1$. Given $\bm{\theta}$, the probability density function of a trajectory is
\vspace{-0.1cm}
\begin{equation}\label{eq:pdf}
\begin{split}
   p_{\pi_{\bm{\theta}}}(\bm{\tau})= p\left(\bm{x}(0)\right) \prod_{k=0}^{T-1} &p\left(\bm{\theta} \bm{x}(k)+\bm{w}(k)|\bm{x}(k)\right) p\left(\bm{x}(k+1) \mid \bm{x}(k), \bm{\theta} \bm{x}(k)+\bm{w}(k)\right),
\end{split}
\end{equation}
which is uniquely determined by $\bm{x}(0)$ and the sequence of perturbations $\bm{w}(0), \bm{w}(1),\cdots, \bm{w}(T-1)$.

In the following, we generate trajectories for each updated $\bm{\theta}$ as if  they  truly come from the system starting from $\bm{x}(0)$ under perturbations on actions given by  $\bm{w}(0),\cdots, \bm{w}(T-1)$. 
 Importantly, we use fixed historic trajectories $\left\{\bm{u}_d, \bm{y}_d\right\}$ where the Hankel matrix $\mathcal{H}$ in~\eqref{eq:Hankel_u_x} satisfies $\text{rank}(\mathcal{H})=n+Tm$, and $\bm{u}_d$ can come from controllers different from the control policy in~\eqref{eq:control_policy}.

 The key is to use $\bm{x}(0)$ and $(\bm{w}(0),\allowbreak\cdots, \bm{w}(T-1))$ as extra constraints to find the $\bm{g}$ in~\eqref{eq:Hankel_g0} that generates the trajectory which follows the same distribution as~\eqref{eq:pdf}. 
From the control policy $\bm{u}(k)=\bm{\theta} \bm{x}(k)+\bm{w}(k)$, the trajectory subject to the perturbations  $\bm{w}(0), \bm{w}(1),\cdots, \bm{w}(T-1)$  should satisfy
\vspace{-0.1cm}
\begin{equation}\label{eq: costrains_w}
\begin{split}
 \left[\begin{array}{c}
\Tilde{\bm{u}}(0) \\
\vdots\\
\Tilde{\bm{u}}(T-1)\\
\end{array}\right]= 
\underbrace{\begin{bmatrix}
\bm{\theta}&  &  &  \\
 &  &  \ddots&  \\
 &  &  & \bm{\theta} \\
\end{bmatrix}}_{\bm{I}_{T}\bigotimes\bm{\theta}}
 \left[\begin{array}{c}
\Tilde{\bm{x}}(0)\\
\vdots\\
\Tilde{\bm{x}}(T-1)
\end{array}\right] 
+\left[\begin{array}{c}
\bm{w}(0) \\
\vdots\\
\bm{w}(T-1)\\
\end{array}\right].    
\end{split}
\end{equation}



Note that the generated initial state is given by $\Tilde{\bm{x}}(0)=\left[\begin{array}{cccc}
\mathcal{H}_x^0
\end{array}\right]\bm{g}$. Combining with~\eqref{eq: costrains_w} gives


\vspace{-0.1cm}

\begin{equation}\label{eq:traj_g0}
\underbrace{\begin{bmatrix}
\mathcal{H}_u-\left(\bm{I}_{T}\otimes\bm{\theta}\right) \mathcal{H}_x\\
\mathcal{H}_x^0
\end{bmatrix}}_{\bm{G}_{\bm{\theta}}}
\bm{g}=
\begin{bmatrix}
\bm{w}(0) \\
\vdots\\
\bm{w}(T-1)\\
\bm{x}(0)
\end{bmatrix}.
\end{equation}

Note that the matrix $\bm{G}_{\bm{\theta}}\in\real^{(Tm+n) \times (L-T+1)}$ is not a square matrix and its rank is determined by the length of historic trajectory $L$. When the trajectory is sufficiently long and  $\text{rank}(\mathcal{H})=n+Tm$, we have $L-T+1>Tm+n$ and thus there might be multiple $\bm{g} $ where~\eqref{eq:traj_g0} holds.
We use the  minimum-norm solution of~\eqref{eq:traj_g0} given by 
\vspace{-0.1cm}
\begin{equation}\label{eq: g_star0}
    \bm{g}^*=\bm{G}_{\bm{\theta}}^\top\left(\bm{G}_{\bm{\theta}} \bm{G}_{\bm{\theta}}^\top\right)^{-1}\left[\begin{array}{c}
\bm{w}(0);
\cdots;
\bm{w}(T-1);
\bm{x}(0)
\end{array}\right].
\end{equation}


In the next Theorem, we prove  the existence and uniqueness of the trajectory generated by~\eqref{eq:traj_g0}. The goal is to show that given $\left(\bm{w}(0),\cdots ,\bm{w}(T-1), \bm{x}(0)\right)$, any $\bm{g} $  that satisfies~\eqref{eq:traj_g0}  will generate the same trajectory using $\mathcal{H}\bm{g}$. So it is suffice to choose the closed-form solution in~\eqref{eq: g_star0}.

\vspace{-0.1cm}
\begin{theorem}\label{thm: unque_trajectory0}
If $\text{rank}(\mathcal{H})=n+Tm$, there exists at least one solution $\bm{g}$ such that~\eqref{eq:traj_g0} holds. 
Given $\left(\bm{w}(0),\cdots ,\bm{w}(T-1), \bm{x}(0)\right)$ and any $\bm{g} $  that solves~\eqref{eq:traj_g0},  $\mathcal{H}\bm{g}^*$ generates the same unique trajectory under the control policy~\eqref{eq:control_policy} parameterized by $\bm{\theta}$.
\end{theorem}

Theorem~\ref{thm: unque_trajectory0} shows that $\mathcal{H}\bm{g}^*$  generates the unique trajectory that starts from an initial state $\bm{x}(0)$ and subsequently implements the control policy  $\bm{u}(k)=\bm{\theta} \bm{x}(k)+\bm{w}(k), \bm{w}(k)\sim\mathcal{D}$. Hence, if we fix $\bm{\theta}$ in~\eqref{eq: g_star0} and sample $(\bm{x}(0), \bm{w}(0), \bm{w}(1),\cdots, \bm{w}(T-1))$, then $\mathcal{H}\bm{g}^*$  will generate a batch of  trajectories following the distribution~\eqref{eq:pdf} corresponding to the parameter $\bm{\theta}$. After the update of $\bm{\theta}$ in each episode of training, we generate a new batch of trajectories by updating $\bm{G}_{\bm{\theta}}$ in~\eqref{eq:traj_g0} and sampling new $(\bm{x}(0), \bm{w}(0), \bm{w}(1),\cdots, \bm{w}(T-1))$. Thus, the generated trajectories adaptively follow the shifted distribution after updating $\bm{\theta}$. Importantly, it overcomes the negative perception in the field that there is no possibility to improve explorations beyond past trajectories~\citep{levine2020offline}. By sampling the noises $\bm{w}(k)\sim\mathcal{D}$, explorations can also be achieved through the generated trajectories. Hence, new trajectories are adaptively generated as if the system were being operated and explored under the updated control policies.

To prove Theorem~\ref{thm: unque_trajectory0}, we first show that the null space of $\bm{G}_{\bm{\theta}}$ is exactly the same as  that of the Hankel matrix $\mathcal{H}$. Then,  $\text{rank}(\mathcal{H})=n+Tm$ yields $\text{rank}(\bm{G}_{\bm{\theta}})=n+Tm$. The full row rank of $\bm{G}_{\bm{\theta}}$ implies that there exist at least one $\bm{g}$ where~\eqref{eq:traj_g0} holds. The uniqueness of trajectory generated by $\bm{g}$  follows from the fact that $\bm{G}_{\bm{\theta}}$ and $\mathcal{H}$ have the same null space. Details of the proof is given below.

\begin{proof}
We first prove that the null space $\mathcal{N}(\bm{G}_{\bm{\theta}})$ is  the same as $\mathcal{N}(\mathcal{H})$ from \textit{(i)}  and \textit{(ii)} :

\textit{(i)} For all $\bm{q}\in \mathcal{N}(\mathcal{H})$,  we have $[\mathcal{H}_x]\bm{q} =\mathbbold{0}_{Tn}$ and $[\mathcal{H}_u]\bm{q} =\mathbbold{0}_{Tm}$. Plugging in  $\bm{G}_{\bm{\theta}}$ in~\eqref{eq:traj_g0} yields  $\bm{G}_{\bm{\theta}}\bm{q} = \mathbbold{0}_{Tm+n} $. Namely, $\bm{q}\in\mathcal{N}(\bm{G}_{\bm{\theta}})$.

\textit{(ii)} For all  $\bm{v}\in \mathcal{N}(\bm{G}_{\bm{\theta}})$,  $\bm{G}_{\bm{\theta}}\bm{v}=\mathbbold{0}_{Tm+n}$ yields $\mathcal{H}_x^0\bm{v} =\mathbbold{0}_{n}$ 
 and 
 $\mathcal{H}_u^k \bm{v}=\hat{\bm{\theta}}\mathcal{H}_x^k \bm{v} \text{ for  } k=0,\cdots,T-1$. Thus, $\mathcal{H}_u^0 \bm{v}=\hat{\bm{\theta}}\mathcal{H}_x^0 \bm{v}=\mathbbold{0}_{m}$. From $\bm{x}(k+1) =\bm{A} \bm{x}(k)+\bm{B} \bm{u}(k)$, we have $\mathcal{H}_x^{k+1}=A \mathcal{H}_x^{k}+B \mathcal{H}_u^{k}.$ From $ \mathcal{H}_x^{0}=\mathbbold{0}_{n}$ and $ \mathcal{H}_u^{0}=\mathbbold{0}_{m}$, we
apply $\mathcal{H}_x^{k+1}=A \mathcal{H}_x^{k}+B \mathcal{H}_u^{k}$ and $\mathcal{H}_u^k \bm{v}=\hat{\bm{\theta}}\mathcal{H}_x^k \bm{v}$ alternately. This induces
$\mathcal{H}_x^k \bm{v} =\mathbbold{0}_{n}$ and $\mathcal{H}_u^k \bm{v} =\mathbbold{0}_{m}$ for $ k=0,\cdots,T-1$. Hence, $\mathcal{H}\bm{v} = \mathbbold{0}_{Tm+Tn} $. 




Next, we prove the existence of the solution in~\eqref{eq:traj_g0}. Note that $\mathcal{H}\in\real^{(Tm+Tn) \times (L-T+1)}$. If $\text{rank}(\mathcal{H})=n+Tm$, then  $\text{rank}(\mathcal{N}(\mathcal{H}))=(L-T+1)-(n+Tm)$.  
Since $\mathcal{N}(\bm{G}_{\bm{\theta}})$ is  the same as $\mathcal{N}(\mathcal{H})$, then $\text{rank}(\mathcal{N}(\bm{G}_{\bm{\theta}}))=(L-T+1)-(n+Tm)$.  It follows directly that $\text{rank}(\bm{G}_{\bm{\theta}})=(L-T+1)-\text{rank}(\mathcal{N}(\bm{G}_{\bm{\theta}}))=n+Tm$.  Note that the number of rows of  $\bm{G}_{\bm{\theta}}$ is $n+Tm$, then the full row-rank of $\text{rank}(\bm{G}_{\bm{\theta}})$ shows that there exists at least one solution such that~\eqref{eq:traj_g0} holds.

Lastly, we show the uniqueness of the generated trajectory. Suppose there exists $\bm{g}_1$ and $\bm{g}_2$, which are both solutions of~\eqref{eq:traj_g0} and $\mathcal{H}\bm{g}_1\neq \mathcal{H}\bm{g}_2$. Since $\bm{g}_1$ and $\bm{g}_2$ are both solution of~\eqref{eq:traj_g0}, then $\bm{G}_{\bm{\theta}}\bm{g}_1=\bm{G}_{\bm{\theta}}\bm{g}_1$ and thus $\left(\bm{g}_1-\bm{g}_2\right)\in\mathcal{N}(\bm{G}_{\bm{\theta}})$.
On the other hand, $\mathcal{H}\bm{g}_1\neq \mathcal{H}\bm{g}_2$ yields $\mathcal{H}\left(\bm{g}_1-\bm{g}_2\right)\neq 0$ and thus $\left(\bm{g}_1-\bm{g}_2\right)\notin\mathcal{N}(\mathcal{H})$. This contradicts that $\mathcal{N}(\bm{G}_{\bm{\theta}})$ is  the same as $\mathcal{N}(\mathcal{H})$. Hence, $\mathcal{H}\bm{g}_1= \mathcal{H}\bm{g}_2$, namely, the generated trajectories are identical.
\end{proof}

\vspace{-0.1cm}
\subsection{Algorithm}

By Theorem~\ref{thm: unque_trajectory0}, using historical data, given $\bm{x}(0)$ and perturbations on actions  $\bm{w}(0),\cdots, \bm{w}(T-1)$, 
a trajectory can be generated as if it comes from sampling the system with the current control policy.
The details of the algorithm is illustrated in Algorithm~\ref{alg: TrajGenFull}. 
We also use REINFORCE policy gradient~\citep{Sutton2018RL} in Appendix~\ref{app: PG_state_feedback} as an example to show how to use the trajectory generation algorithm in RL methods.
The key benefit of Algorithm~\ref{alg: TrajGenFull}
is that it is adaptive to the updates of parameter $\bm{\theta}$ and  $\bm{w}(t)$ for exploration. In particular, $\bm{w}(t)$ can be sampled from any distribution $\mathcal{D}$, making it versatile for different applications.

\begin{algorithm}[h]
 \caption{Trajectory generation for state-feedback control }
 \label{alg: TrajGenFull}
 \begin{algorithmic}[1]
 \renewcommand{\algorithmicrequire}{\textbf{Require: }}
 \renewcommand{\algorithmicensure}{\textbf{Data collection:}}
 \ENSURE Collect historic measurement of the system and stack each $T$-length input-output trajectory  as Hankel matrix $\mathcal{H}$ shown in~\eqref{eq:Hankel_u_x} until  $\text{rank}(\mathcal{H})=n+Tm$ \\
  \renewcommand{\algorithmicensure}{\textbf{Data generation:}}
 \ENSURE 
 \textit{Input} :Hankel matrix $\mathcal{H}$, weights $\bm{\theta}$ and the distribution $\mathcal{D}$ for the control policy,  the batchsize $Q$ (number of the generated  trajectories), the distribution $\mathcal{S}_x $ of the initial states\footnotemark \\
     \SetKwFunction{FMain}{TrajectoryGen}
    \SetKwProg{Fn}{Function}{:}{}
    \Fn{\FMain{$\mathcal{H},\bm{\theta},\mathcal{D},Q, \mathcal{S}_x  $}}{
          Plug in $\bm{\theta}$ to compute $\bm{G}_{\bm{\theta}}=\begin{bmatrix}
\mathcal{H}_u-\left(\bm{I}_{T}\otimes\bm{\theta}\right) \mathcal{H}_x;
\mathcal{H}_x^0
\end{bmatrix}$

        \For{$i = 1$ to $Q$}
        {Sample $\bm{x}_i(0)$ from $\mathcal{S}_x  $. Sample $\left\{\bm{w}_i(0),\cdots,\bm{w}_i(T-1)\right\}$ from  $\mathcal{D}$.\\
        Compute the  coefficient $\bm{g}_i^*=\bm{G}_{\bm{\theta}}^\top\left(\bm{G}_{\bm{\theta}} \bm{G}_{\bm{\theta}}^\top\right)^{-1}\left[
        \bm{w}_i(0);
        \cdots;
        \bm{w}_i(T-1);
        \bm{x}_i(0)
        \right]$.
        \\
        Generate the $i$-th trajectory    $\bm{\tau}_i:=\left[
        \Tilde{\bm{u}}_i(0);
        \cdots;
        \Tilde{\bm{u}}_i(T-1);
        \Tilde{\bm{x}}_i(0);
        \cdots;
        \Tilde{\bm{x}}_i(T-1);
        \right]=\mathcal{H}\bm{g}_i^*$.

        }
        \textbf{return} $ \left[\bm{\tau}_1,\cdots,\bm{\tau}_Q \right]$ 
        }
 \end{algorithmic} 
 \end{algorithm}
\footnotetext{The set of historic initial states in past trajectories can be used to estimate $\mathcal{S}_x $.}

\section{Trajectory Generation for Output-Feedback Control}
\label{sec: TrajGen_PartialState}


In this section, we show how to obtain a Markov decision process by defining an extended state using input-output
 trajectory. 
 The key difference to Section~\ref{sec:TrajGen_FullState} is that $\bm{G}_{\bm{\theta}}$ may not be full row rank even when the rank condition on the Hankel matrix holds. Thus, the coefficients for generated trajectories and associated proofs are more nuanced. 

\subsection{Extended states for constructing Markov decision process}


Let $
\mathcal{O}_{[0, \ell]}(\bm{A}, \bm{C}):=\operatorname{col}\left(\bm{C}, \bm{C} \bm{A}, \ldots, \bm{C} \bm{A}^{\ell-1}\right)
$ be the extended observability matrix.
The lag of the system~\eqref{eq:sys_full} is defined by the smallest integer $\ell\in \mathbb{Z}_{\geq 0}$ such that the observability matrix $\mathcal{O}_{[0, \ell]}(\bm{A}, \bm{C})$ has rank $n$, i.e., the state can be reconstructed from $\ell$ measurements~\citep{huang2021robust}.

Let $T_0\geq\ell$ be the length of a trajectory. 
Define the extended states as
\vspace{-0.1cm}
\begin{equation}
\mathcal{X}(k-1) :=\left[\begin{array}{c}
\bm{y}(k-T_0);
\cdots ;
\bm{y}(k-1);
\bm{u}(k-T_0);
\cdots ;
\bm{u}(k-2)
\end{array}\right].        
\end{equation}


Then extending the system transition from time step 0 to $T_0$ gives 
\vspace{-0.1cm}
\begin{equation}\label{eq:sys_extended}
\mathcal{X}(k)=\Tilde{\bm{A}}\mathcal{X}(k-1)+\Tilde{\bm{B}}\bm{u}(k-1)
\text{ for } k\geq T_0, k\in\mathbb{Z},    
\end{equation}
which is a Markov decision process in terms of the extended states. Detailed proof and the definition of system transition matrix $(\Tilde{\bm{A}},\Tilde{\bm{B}})$ is given in Appendix~\ref{app: transition_partial}.
For the output-feedback control law in~\eqref{eq:control_policy},
we have  $ p\left(\bm{u}(k)|\mathcal{X}(k)\right)=p\left(\bm{u}(k)|\bm{y}(k)\right)   
$ and it is straightforward to show that policy gradient algorithm using~\eqref{eq: PG_grad} still works. The proof is given in Appendix~\ref{app:Policy Gradient}.


By defining the the Hankel matrix as $\mathcal{H}:=\begin{bmatrix}
\bm{U}_{0, T, L-T+1} \\
\bm{Y}_{0, T, L-T+1}
\end{bmatrix}\in\real^{(Tm+Tq)\times(L-T+1)}$, the following fundamental Lemma in terms of input-output trajectories holds. 
\vspace{-0.1cm}
\begin{lemma}[Fundamental Lemma~\citet{willems2005note, markovsky2022identifiability} ]\label{lem:fundemental_partial}

If $\text{rank}\left[\begin{smallmatrix}
\bm{U}_{0, T, L-T+1} \\
\bm{Y}_{0, L-T+1}
\end{smallmatrix}\right]=n+Tm$,
then any length-$T$ input/output trajectory of system~\eqref{eq:sys_full} can be expressed as
$
\left[\begin{smallmatrix}
\bm{u}_{[0, T-1]} \\
\bm{y}_{[0, T-1]}
\end{smallmatrix}\right]
=\left[\frac{\bm{U}_{0, T, L-T+1}}{\bm{Y}_{0, T, L-T+1}}\right] \bm{g}
$
where $\bm{g} \in \mathbb{R}^{L-T+1}$.

\end{lemma}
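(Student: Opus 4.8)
\emph{Proof proposal.} The plan is to carry over, to input/output data, the argument behind Lemma~\ref{lem:fundemental}: pass to the state-space description, realize the set $\mathcal{B}_T\subseteq\real^{T(m+q)}$ of all length-$T$ input/output trajectories of~\eqref{eq:sys_full} as the range of one fixed linear map, compute $\dim\mathcal{B}_T$, and then finish with a dimension count showing that the columns of $\mathcal{H}=[\bm{U}_{0,T,L-T+1};\bm{Y}_{0,T,L-T+1}]$ already span $\mathcal{B}_T$. Expanding~\eqref{eq:sys_full} over $[0,T-1]$ and substituting $\bm{y}(k)=\bm{C}\bm{x}(k)$ yields, for every trajectory,
\[
\begin{bmatrix}\bm{u}_{[0,T-1]}\\ \bm{y}_{[0,T-1]}\end{bmatrix}
=\bm{M}\begin{bmatrix}\bm{u}_{[0,T-1]}\\ \bm{x}(0)\end{bmatrix},
\qquad
\bm{M}:=\begin{bmatrix}\bm{I}_{Tm} & \mathbbold{0}_{Tm\times n}\\ \mathcal{T}^{\,y}_{[0,T-1]} & \mathcal{O}_{[0,T]}(\bm{A},\bm{C})\end{bmatrix},
\]
where $\mathcal{T}^{\,y}_{[0,T-1]}$ is the block-Toeplitz matrix of Markov parameters $\bm{C}\bm{A}^{i}\bm{B}$ and $\mathcal{O}_{[0,T]}(\bm{A},\bm{C})=\operatorname{col}(\bm{C},\bm{C}\bm{A},\dots,\bm{C}\bm{A}^{T-1})$. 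Hence $\mathcal{B}_T=\range{\bm{M}}$.

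Then I would proceed in four steps. \emph{(i)} Since the top block of $\bm{M}$ is $[\bm{I}_{Tm}\ \ \mathbbold{0}]$, one has $\ker\bm{M}=\{\mathbbold{0}_{Tm}\}\times\ker\mathcal{O}_{[0,T]}(\bm{A},\bm{C})$, so by rank--nullity $\dim\mathcal{B}_T=\operatorname{rank}\bm{M}=Tm+\operatorname{rank}\mathcal{O}_{[0,T]}(\bm{A},\bm{C})$; because $(\bm{A},\bm{C})$ is observable this equals $Tm+n$ as soon as $T\ge\ell$ ($\ell$ the lag of~\eqref{eq:sys_full}), and in any case $\dim\mathcal{B}_T\le Tm+n$. \emph{(ii)} By time-invariance of~\eqref{eq:sys_full}, every contiguous length-$T$ window of the recorded signal is itself a trajectory, so $\mathcal{H}=\bm{M}\,[\bm{U}_{0,T,L-T+1};\,\bm{X}_{0,L-T+1}]$ and in particular $\operatorname{colspan}(\mathcal{H})\subseteq\mathcal{B}_T$. \emph{(iii)} The rank hypothesis gives $\operatorname{rank}(\mathcal{H})\ge n+Tm$ (the matrix there is a row-submatrix of $\mathcal{H}$; equivalently, by the factorization in (ii) and the injectivity of $\bm{M}$, $\operatorname{rank}(\mathcal{H})=\operatorname{rank}[\bm{U}_{0,T,L-T+1};\,\bm{X}_{0,L-T+1}]$), and together with $\operatorname{colspan}(\mathcal{H})\subseteq\mathcal{B}_T$ and $\dim\mathcal{B}_T\le n+Tm$ this forces $\operatorname{rank}(\mathcal{H})=n+Tm=\dim\mathcal{B}_T$. \emph{(iv)} A subspace of $\mathcal{B}_T$ whose dimension equals $\dim\mathcal{B}_T$ must be all of $\mathcal{B}_T$, so $\operatorname{colspan}(\mathcal{H})=\mathcal{B}_T$; unwinding the definitions, every length-$T$ input/output trajectory equals $[\bm{U}_{0,T,L-T+1};\,\bm{Y}_{0,T,L-T+1}]\,\bm{g}$ for some $\bm{g}\in\real^{L-T+1}$, which is the assertion.

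The only substantive point is step (i), the exact identity $\dim\mathcal{B}_T=n+Tm$: the bound $\dim\mathcal{B}_T\le n+Tm$ is immediate, whereas the matching lower bound requires $\mathcal{O}_{[0,T]}(\bm{A},\bm{C})$ to have full column rank $n$ --- i.e.\ $T$ at least the lag $\ell$, so that a trajectory is \emph{uniquely} parametrized by $(\bm{u}_{[0,T-1]},\bm{x}(0))$; without this the parametrization is many-to-one and the count fails. (Note the rank hypothesis can hold at all only when $\dim\mathcal{B}_T\ge n+Tm$, so it already forces $T\ge\ell$ and hence the injectivity of $\bm{M}$ used in (iii).) The remaining steps are routine linear algebra. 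Since this statement is precisely the input/output Fundamental Lemma of~\citet{willems2005note, markovsky2022identifiability}, the argument above is essentially a transcription of theirs into the present notation, and no ingredient beyond the dimension count is needed.
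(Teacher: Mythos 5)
The paper does not actually prove this lemma---it is imported verbatim from the cited works of Willems et al.\ and Markovsky---so there is no internal argument to compare against; what can be judged is whether your argument is a correct self-contained proof, and it is. Your route (realize the set $\mathcal{B}_T$ of length-$T$ input/output trajectories as $\range{\bm{M}}$ with $\bm{M}=\bigl[\begin{smallmatrix}\bm{I}_{Tm} & \mathbbold{0}\\ \mathcal{T}^{y}_{[0,T-1]} & \mathcal{O}_{[0,T]}\end{smallmatrix}\bigr]$, note $\operatorname{colspan}(\mathcal{H})\subseteq\mathcal{B}_T$ via $\mathcal{H}=\bm{M}[\bm{U}_{0,T,L-T+1};\bm{X}_{0,L-T+1}]$, and squeeze $n+Tm\le\operatorname{rank}(\mathcal{H})\le\dim\mathcal{B}_T\le n+Tm$) is exactly the classical dimension-count behind the fundamental lemma, and each step checks out; the main line does not even need injectivity of $\bm{M}$, which you correctly relegate to a remark. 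One caveat you partially notice but could state more bluntly: as typeset, the hypothesis involves $\bm{Y}_{0,L-T+1}$, a single $q$-row output block, so the rank condition $n+Tm$ on a $(Tm+q)$-row matrix can only hold when $q\ge n$, which is at odds with the partially observed setting this lemma is meant to serve; the intended condition is presumably the state-based one $\operatorname{rank}[\bm{U}_{0,T,L-T+1};\bm{X}_{0,L-T+1}]=n+Tm$ (or persistency of excitation of order $T+n$ plus controllability), under which your factorization $\mathcal{H}=\bm{M}[\bm{U};\bm{X}]$ yields the conclusion even more directly, for every $T$, without the $T\ge\ell$ discussion.
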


When the rank condition in Lemma~\ref{lem:fundemental_partial} holds, linear combination of the columns of the Hankel matrix is an input/output trajectory of the system. We then generate trajectory of length $T$ using
\vspace{-0.1cm}
\begin{equation}\label{eq:Hankel_g1}
 \left[\begin{array}{c}
\Tilde{\bm{u}}(0);
\cdots;
\Tilde{\bm{u}}(T-1);
\Tilde{\bm{y}}(0);
\cdots;
\Tilde{\bm{y}}(T-1)
\end{array}\right]=   
\mathcal{H}\bm{g}. 
\end{equation}


For convenience, we adopt the notation $\mathcal{H}_u=U_{0, T, L-T+1} $, $\mathcal{H}_u=Y_{0, T, L-T+1} $ in the following sections. To represent the lines of blocks starting from the time $k=0,\cdots,T-1$, $\mathcal{H}_u^k$ the same as in~\eqref{eq:H_u_k} and $\mathcal{H}_y^k :=[
 y_d(k) \, \cdots\,y_d(L-T+k) ]. $ We also define the stacked blocks starting from $k_1, \cdots, k_2$ as $
\mathcal{H}_y^{k_1 : k_2} :=
\begin{bmatrix}
 {H}_y^{k_1 };
  {H}_y^{k_1 +1};
 \cdots;
   {H}_y^{k_2}
\end{bmatrix} $ and $
\mathcal{H}_u^{k_1 : k_2} :=
\begin{bmatrix}
 {H}_u^{k_1 };
  {H}_u^{k_1 +1};
 \cdots;
   {H}_u^{k_2}
\end{bmatrix}$.


\subsection{Trajectory generation}\label{subsec: TrajGenPartial}
Using the transition dynamics~\eqref{eq:sys_extended}, the probability density function of a length-$T$ trajectory is
\vspace{-0.1cm}
\begin{equation*}\label{eq:prob_traj_partial}
    p_{\pi_{\bm{\theta}}}(\bm{\tau})= p\left(\mathcal{X}\left(T_0-1\right)\right) \prod_{k=T_0-1}^{T-1} p\left(\bm{\theta} \bm{y}(k)+\bm{w}(k)|\mathcal{X}(k)\right) p\left(\mathcal{X}(k+1)|\mathcal{X}(k), \bm{\theta} \bm{y}(k)+\bm{w}(k)\right),
\end{equation*}
which is uniquely determined by $\mathcal{X}\left(T_0-1\right)$ and the sequences  $\bm{w}(T_0-1), \cdots, \bm{w}(T-1)$.

In analogy with the derivation in~\eqref{eq: costrains_w}, we aim to generate the trajectory starting from $\mathcal{X}\left(T_0-1\right)$ under perturbations on actions given by  $\bm{w}(T_0-1),\cdots, \bm{w}(T-1)$. From the control policy $\bm{u}(k)=\bm{\theta} \bm{y}(k)+\bm{w}(k)$, the trajectory subject to the perturbations $\bm{w}(T_0), \cdots, \bm{w}(T-1)$ should satisfy 
\vspace{-0.1cm}
\begin{equation}\label{eq: costrains_w_partial}
\begin{split}
&\begin{bmatrix}
\mathcal{H}_u^{T_0 -1: T-1}
\end{bmatrix}\bm{g}=\left(\bm{I}_{T-T_0}\otimes\bm{\theta}\right)
\begin{bmatrix}
\mathcal{H}_y^{T_0 -1: T-1}
\end{bmatrix}\bm{g}
+\begin{bmatrix}
\bm{w}(T_0-1) \\
\vdots\\
\bm{w}(T-1)\\
\end{bmatrix}.    
\end{split}
\end{equation}




Note that the generated extended initial state is given by $\Tilde{\mathcal{X}}\left(T_0-1\right):=\begin{bmatrix}
\mathcal{H}_y^{0 : T_0-1} ;
\mathcal{H}_u^{0 : T_0-2}
\end{bmatrix}\bm{g}$. Together with the constraints in~\eqref{eq: costrains_w_partial} gives 
\vspace{-0.1cm}
\begin{equation}\label{eq:traj_g_partial}
\begin{split}
\underbrace{\left[\begin{array}{c}
\mathcal{H}_u^{T_0 -1: T-1} -\left(\bm{I}_{T-T_0}\otimes\bm{\theta}\right)\mathcal{H}_y^{T_0 -1: T-1} \\
\hline
\mathcal{H}_y^{0 : T_0-1}\\
\mathcal{H}_u^{0 : T_0-2}\\
\end{array}\right]}_{G_\theta}\bm{g}
=
\underbrace{\left[\begin{array}{c}
\bm{w}(T_0-1) \\
\vdots\\
\bm{w}(T-1)\\
\hline
\mathcal{X}(T_0-1)
\end{array}\right]}_{\bm{R}}    
\end{split}
\end{equation}
\vspace{-0.6cm}


Note that the matrix $\bm{G}_{\bm{\theta}}\in\real^{(Tm+T_0 d) \times (L-T+1)}$ is not a square matrix and there might be multiple solutions to~\eqref{eq:traj_g_partial}. Moreover,  $\bm{G}_{\bm{\theta}}$ may not be full row rank and thus $\left(\bm{G}_{\bm{\theta}} \bm{G}_{\bm{\theta}}^\top\right)$ may not be invertible. Here, we compute the eigenvalue decomposition of $\left(\bm{G}_{\bm{\theta}} \bm{G}_{\bm{\theta}}^\top\right)$. Let $s$ be the number of nonzero eigenvalue of  $\left(\bm{G}_{\bm{\theta}} \bm{G}_{\bm{\theta}}^\top\right)$. Let $\lambda_i$ be the $i$-th non-zero eigenvalue and $\bm{p}_i$ be the associated eigenvector of  $\left(\bm{G}_{\bm{\theta}} \bm{G}_{\bm{\theta}}^\top\right)$. Denote $\bm{P}_{\bm{\theta}}:=\begin{bmatrix}
\bm{p}_1 & \bm{p}_2 &\cdots& \bm{p}_s
\end{bmatrix}$ and $\bm{\Lambda}=\text{diag}(\lambda_1,\lambda_2,\cdots,\lambda_s)$. Then clearly 
$\left(\bm{G}_{\bm{\theta}} \bm{G}_{\bm{\theta}}^\top\right)=\bm{P}_{\bm{\theta}}\bm{\Lambda}\bm{P}_{\bm{\theta}}^\top$ and we compute the solution of~\eqref{eq:traj_g_partial} given by
\vspace{-0.1cm}
\begin{equation}\label{eq: g_star_partial}
    \bm{g}^*=\bm{G}_{\bm{\theta}}^\top\bm{P}_{\bm{\theta}}\bm{\Lambda}^{-1}\bm{P}_{\bm{\theta}}^\top\left[\begin{array}{c}
\bm{w}(T_0-1);
\cdots;
\bm{w}(T-1);
\mathcal{X}(T_0-1)
\end{array}\right].
\end{equation}
\vspace{-0.6cm}

Next, we prove  the existence and uniqueness of the trajectory generated by~\eqref{eq:traj_g_partial}. The goal is to show that given $\left(\bm{w}(T_0-1),\cdots ,\bm{w}(T-1), \mathcal{X}(T_0-1)\right)$ , any
$\bm{g} $ that satisfies~\eqref{eq:traj_g_partial} will generate the same trajectory using $\mathcal{H}\bm{g}$. So it is suffice to choose the closed-form solution in~\eqref{eq: g_star_partial}. 
\begin{theorem}\label{thm: unque_trajectory0_partial}
If $\text{rank}(\mathcal{H})=n+Tm$, there exists at least one solution $\bm{g}^*$ such that~\eqref{eq:traj_g_partial} holds. 
Given $\left(\bm{w}(T_0-1),\cdots ,\bm{w}(T-1), \mathcal{X}(T_0-1)\right)$ and any $\bm{g} $  that solves~\eqref{eq:traj_g_partial},  $\mathcal{H}\bm{g}^*$ generates the same unique trajectory under the control policy~\eqref{eq:control_policy} parameterized by $\bm{\theta}$.
\end{theorem}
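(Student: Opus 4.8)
## Proof Proposal

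The plan is to mirror the structure of the proof of the state-feedback lemma (the one showing $\mathcal{N}(G_\theta) = \mathcal{N}(\mathcal{H})$ and that any solution $\bm{g}$ yields the unique trajectory), but to be careful about the point flagged in the text: here $\bm{G}_{\bm{\theta}}$ need not be full row rank, so I cannot argue existence through invertibility of $\bm{G}_{\bm{\theta}}\bm{G}_{\bm{\theta}}^\top$, and I cannot conclude uniqueness of $\bm{g}$ — only uniqueness of the \emph{trajectory} $\mathcal{H}\bm{g}$. I will split the claim into two parts: (i) existence of a solution $\bm{g}^*$ to~\eqref{eq:traj_g_partial}, and (ii) the statement that every solution $\bm{g}$ produces the same trajectory, which is exactly the trajectory generated by the system under~\eqref{eq:control_policy} with initial extended state $\mathcal{X}(T_0-1)$ and perturbations $\bm{w}(T_0-1),\dots,\bm{w}(T-1)$.

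For existence, the key observation is that the right-hand side $\bm{R}$ is not arbitrary: the true system trajectory starting from $\mathcal{X}(T_0-1)$ under the given perturbations is itself a length-$T$ input/output trajectory of~\eqref{eq:sys_full}, so by the Fundamental Lemma~\ref{lem:fundemental_partial} there is some $\bm{g}$ with $\mathcal{H}\bm{g}$ equal to that trajectory. I would verify directly that this $\bm{g}$ satisfies~\eqref{eq:traj_g_partial}: the bottom blocks $\mathcal{H}_y^{0:T_0-1}\bm{g}$, $\mathcal{H}_u^{0:T_0-2}\bm{g}$ reproduce the components of $\mathcal{X}(T_0-1)$ by construction of the extended state, and the top block reproduces $\bm{w}(k) = \bm{u}(k) - \bm{\theta}\bm{y}(k)$ for $k = T_0-1,\dots,T-1$ because $\mathcal{H}\bm{g}$ is that very trajectory and the actions along it satisfy the control law with those perturbations. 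Hence $\bm{R} \in \range(\bm{G}_{\bm{\theta}})$, so~\eqref{eq:traj_g_partial} is consistent and $\bm{g}^*$ in~\eqref{eq: g_star_partial} — the pseudoinverse-type solution — is a valid choice. (I would note the closed form in~\eqref{eq: g_star_partial} equals $\bm{G}_{\bm{\theta}}^\dagger \bm{R}$, which lies in $\range(\bm{G}_{\bm{\theta}}^\top)$ and solves the system whenever it is consistent.)

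For the uniqueness-of-trajectory part, suppose $\bm{g}_1, \bm{g}_2$ both solve~\eqref{eq:traj_g_partial}; then $v := \bm{g}_1 - \bm{g}_2 \in \mathcal{N}(\bm{G}_{\bm{\theta}})$, so it suffices to show $\mathcal{N}(\bm{G}_{\bm{\theta}}) \subseteq \mathcal{N}(\mathcal{H})$, which gives $\mathcal{H}\bm{g}_1 = \mathcal{H}\bm{g}_2$. This is the analogue of the null-space argument in the state-feedback lemma, and it is where I expect the main work. From $\bm{G}_{\bm{\theta}}v = 0$ I get $\mathcal{H}_u^{k}v = \bm{\theta}\,\mathcal{H}_y^{k}v$ for $k = T_0-1,\dots,T-1$, together with $\mathcal{H}_y^{k}v = 0$ for $k = 0,\dots,T_0-1$ and $\mathcal{H}_u^{k}v = 0$ for $k = 0,\dots,T_0-2$. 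The Hankel columns are genuine trajectories, so $\mathcal{H}_x^{k+1}\cdot(\text{state lift}) $ relations hold; more directly, each column of $\mathcal{H}v$ is (a scalar-weighted combination of) trajectories, and I can propagate: knowing $q$ consecutive outputs and the intervening inputs are zero forces, via observability/the lag $\ell \le T_0$, the underlying state at time $T_0-1$ to be annihilated (in the sense that $\mathcal{H}_x^{T_0-1}v = 0$), and then the control-law relation $\mathcal{H}_u^{T_0-1}v = \bm{\theta}\mathcal{H}_y^{T_0-1}v = \bm{\theta} \bm{C}\,\mathcal{H}_x^{T_0-1}v = 0$ gives $\mathcal{H}_u^{T_0-1}v = 0$, hence $\mathcal{H}_y^{T_0}v = \bm{C}\mathcal{H}_x^{T_0}v = \bm{C}(\bm{A}\mathcal{H}_x^{T_0-1}v + \bm{B}\mathcal{H}_u^{T_0-1}v) = 0$, and induction on $k$ carries this forward through $k = T-1$, yielding $\mathcal{H}_u^k v = 0$ and $\mathcal{H}_y^k v = 0$ for all $k$, i.e. $v \in \mathcal{N}(\mathcal{H})$.

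The main obstacle is the first step of that induction: unlike the state-feedback case, the bottom blocks of $\bm{G}_{\bm{\theta}}$ only pin down outputs $\bm{y}(0),\dots,\bm{y}(T_0-1)$ and inputs $\bm{u}(0),\dots,\bm{u}(T_0-2)$, not the state directly, so I must invoke the observability/lag condition ($T_0 \ge \ell$) and the extended-state recursion~\eqref{eq:sys_extended} to recover $\mathcal{H}_x^{T_0-1}v = 0$ — equivalently, $\Tilde{\mathcal{X}}$-component of $\mathcal{H}v$ being zero forces the lifted state to be zero because $\mathcal{O}_{[0,T_0]}(\bm{A},\bm{C})$ has rank $n$. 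Once the state is annihilated at the "seam" $k = T_0-1$, the rest is the same clean forward propagation as in the earlier lemma. Finally, I would close by noting that the trajectory produced, being the unique system response to a fixed initial extended state and a fixed perturbation sequence, coincides with the sample drawn from $p_{\pi_{\bm{\theta}}}(\bm{\tau})$ conditioned on those quantities, which is what the theorem asserts; choosing $\bm{g}^*$ from~\eqref{eq: g_star_partial} is then justified since it is one such solution.
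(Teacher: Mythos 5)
Your proposal is correct, and its uniqueness half is essentially the paper's argument: the paper also proves that $\mathcal{N}(\bm{G}_{\bm{\theta}})$ coincides with $\mathcal{N}(\mathcal{H})$ (its Lemma on the null space in the partial-observation case), using exactly the observability/lag mechanism you describe — the bottom blocks of $\bm{G}_{\bm{\theta}}$ annihilate $T_0$ consecutive outputs and the intervening inputs, which together with $T_0\geq\ell$ forces the underlying data-state combination to vanish at the seam $k=T_0-1$, after which the control-law rows propagate the zeros forward. Where you genuinely diverge is existence: the paper counts dimensions — it shows the set of admissible right-hand sides $\bm{R}$ (parameterized through the map from $\bm{x}(0)$ to $\mathcal{X}(T_0-1)$ and the perturbations) has dimension at most $n+Tm$, proves $\mathrm{rank}(\bm{G}_{\bm{\theta}})=n+Tm$ from $\mathrm{rank}(\mathcal{H})=n+Tm$, and concludes consistency from the matching ranks — whereas you construct a solution directly by rolling the true system forward from $\mathcal{X}(T_0-1)$ under the given perturbations and invoking the fundamental lemma to write that trajectory as $\mathcal{H}\bm{g}$, then checking this $\bm{g}$ satisfies~\eqref{eq:traj_g_partial}. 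Your route is more elementary and sidesteps the rank computation for $\bm{G}_{\bm{\theta}}$ entirely, at the price of making explicit the assumption (implicit in both proofs, and in the algorithm, since $\mathcal{X}(T_0-1)$ is drawn from historical data) that the given extended initial state is consistent with the system; the paper's rank argument buys the additional structural fact $\mathrm{rank}(\bm{G}_{\bm{\theta}})=n+Tm$, which characterizes the solution set and underpins the decomposition used in~\eqref{eq: g_star_partial}, though as you note the pseudoinverse-type formula solves any consistent system regardless. The only slips in your write-up are cosmetic: "$q$ consecutive outputs" should read "$T_0$ consecutive outputs," and the first-step annihilation is cleanest stated at time $0$ (i.e., $\mathcal{O}_{[0,T_0]}$ applied to the lifted state at $k=0$) before propagating to $k=T_0-1$.
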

The proof of Theorem~\ref{thm: unque_trajectory0_partial} is not as straightforward as Theorem~\ref{thm: unque_trajectory0}, because $\bm{G}_{\bm{\theta}}$ and  $\bm{R}$ in~\eqref{eq:traj_g_partial} may not be full row-rank. The detailed proof is given in Appendix~\ref{app:thm_partial} and we sketch the proof as follows.
We use the mapping from $\bm{x}(0)$ to $\mathcal{X}(T_0-1)$ to show that the rank of  $\bm{R}$ in~\eqref{eq:traj_g_partial} is at most $n+Tm$. Leveraging the relation in every $T_0$ blocks derived from~\eqref{eq:sys_extended}, we
show in Lemma~\ref{lem:null_partial} that $\text{rank}(\bm{G}_{\bm{\theta}})=n+Tm$ if $\text{rank}(\mathcal{H})=n+Tm$. The existence of a solution in~\eqref{eq:traj_g_partial} is therefore guaranteed by the same row-rank of the two sides. The uniqueness of the trajectory generated by  $\mathcal{H}\bm{g}^*$ is proved by showing that the Null space $\mathcal{N}(\bm{G}_{\bm{\theta}})$ is  the same as $\mathcal{N}(\mathcal{H})$.

We can generate a trajectory  $\mathcal{H}\bm{g}^*$ by randomly sampling $\mathcal{X}(T_0-1)$ and  $\bm{w}(t)\in\mathcal{D}$ for $t=T_0-1,\cdots,T-1$.  For the cost~\eqref{eq:Opt_obj} calculated on the trajectory of the length $K$, we setup $T=K+T_0-1$, and using the generated trajectory $\mathcal{H}\bm{g}^*$ from $T_0-1$ to $T-1$ to train the controller. The detailed  algorithm can be found in Appendix~\ref{app:algorithm}.

\vspace{-0.5cm}
\section{Experiment}
\label{sec: experiment}
We end the paper with case studies for the control of the batch reactor system in~\citep{de2019formulas} and the power distribution system in~\citep{cui2022decentralized}. Both state-feedback and output-feedback control are studied in these systems. Detailed simulation setting and results are provided in  Appendix~\ref{app:exp_reactor} and Appendix~\ref{app:exp_voltage}. 
Code is available at 
\url{https://github.com/Wenqi-Cui/Trajectory-Generation}. Major simulation results are summarized below. 
\paragraph{Experiment Setup.} 
 We use REINFORCE policy gradient algorithm~\citep{Sutton2018RL} to train a linear feedback controller, with the goal to minimize  the cost of trajectories with length $K$. 
Let $E$ be the number of episode in training and $Q$ be the batch size of trajectories collected for each episode, respectively. Standard policy gradient algorithms needs  $Q\times K \times E$ samples. 
 We compare the performance of the REINFORCE policy gradient algorithm using the generated trajectories (labeled as PG-TrajectoryGen) and the same algorithm using trajectories sampled by interacting with the system (labeled as PG-Sample-Q for the batchsize Q). For testing, we randomly sample 800 initial states and compare the cost on trajectories starting from these states. 
 
\paragraph{State-feedback control in a batch reactor system.}
We use the model of a batch reactor system in~\citep{de2019formulas} , where $\bm{x}(t)\in\real^4, \bm{u}(t)\in\real^2$. The time horizon is $K=30$. Since the state is observed, we setup $T=K=30$ and collect historic trajectory of length $L = (m + 1)T -1 + n=93$ such that $\text{rank}(\mathcal{H})=n+Tm$. With a sampling period of 0.1s, the data collection takes 9.3s. We generate trajectories using Algorithm~\ref{alg: TrajGenFull} with a batch size of 1200 in each  episode. 
 Figure~\ref{fig:Loss_AB_full_main}(a) and Figure~\ref{fig:Loss_AB_full_main}(b) shows the training and the testing loss, respectively.
 PG-TrajectoryGen attains the same performance as  PG-Sample-1200, since Theorem~\ref{thm: unque_trajectory0} guarantees that we generate trajectories  as if it were truly sampled on the system. Moreover,  Figure~\ref{fig:Loss_AB_full_main}(a)-(c) shows that the loss attained by   PG-Sample-Q reduces with larger Q, but it comes at the expense of increased number of samples from the system. By contrast, the number of samples in PG-TrajectoryGen purely comes
from the fixed historic trajectory of the length $L=93$, which is much smaller than PG-Sample-10 that needs $10\times30\times400=120000$ online samples. 

\vspace{-0.1cm}

\begin{figure}[H]
\centering
\subfigure[Training Loss]{\includegraphics[width=1.8in]{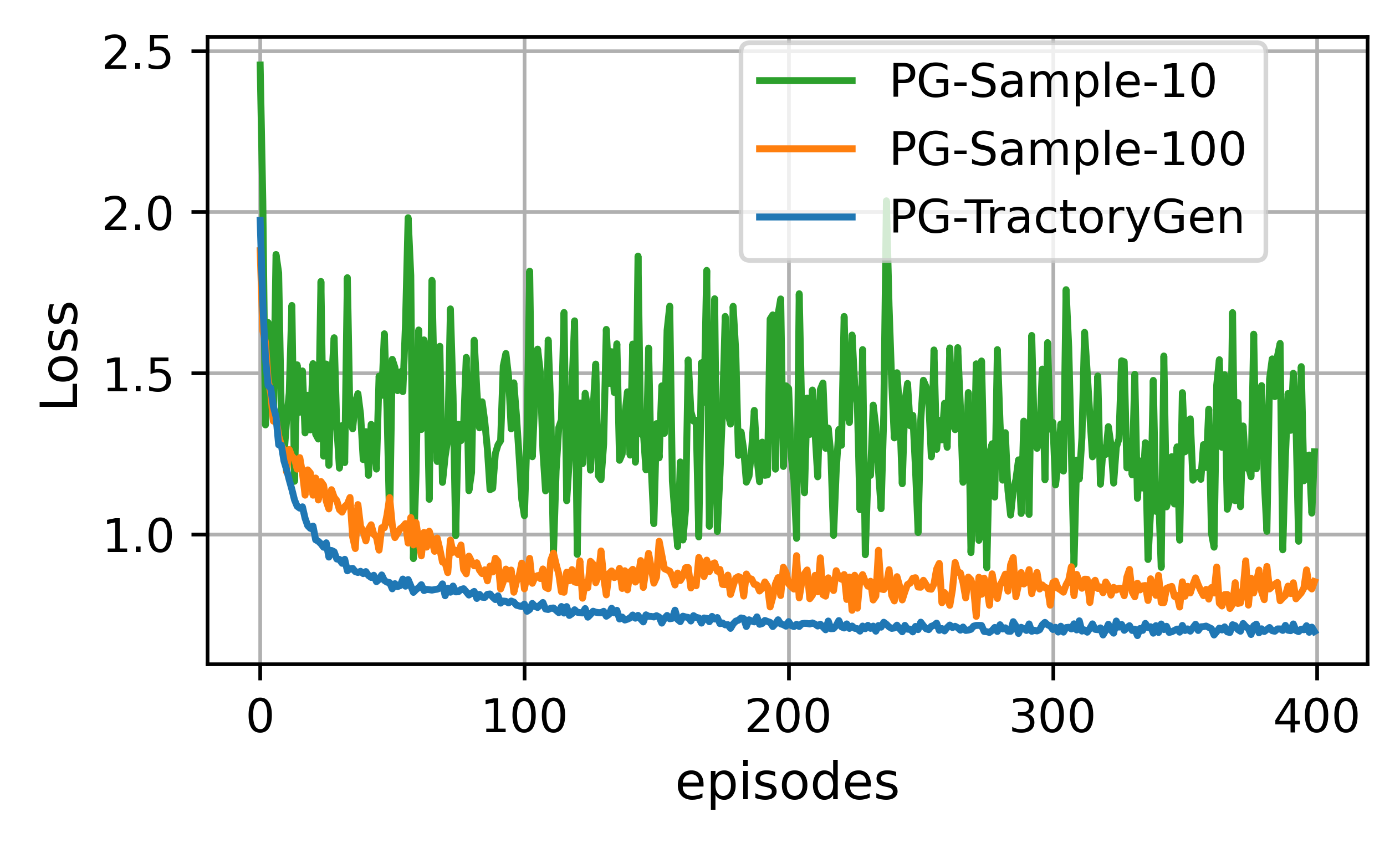}}
\subfigure[Testing Loss]{\includegraphics[width=1.8in]{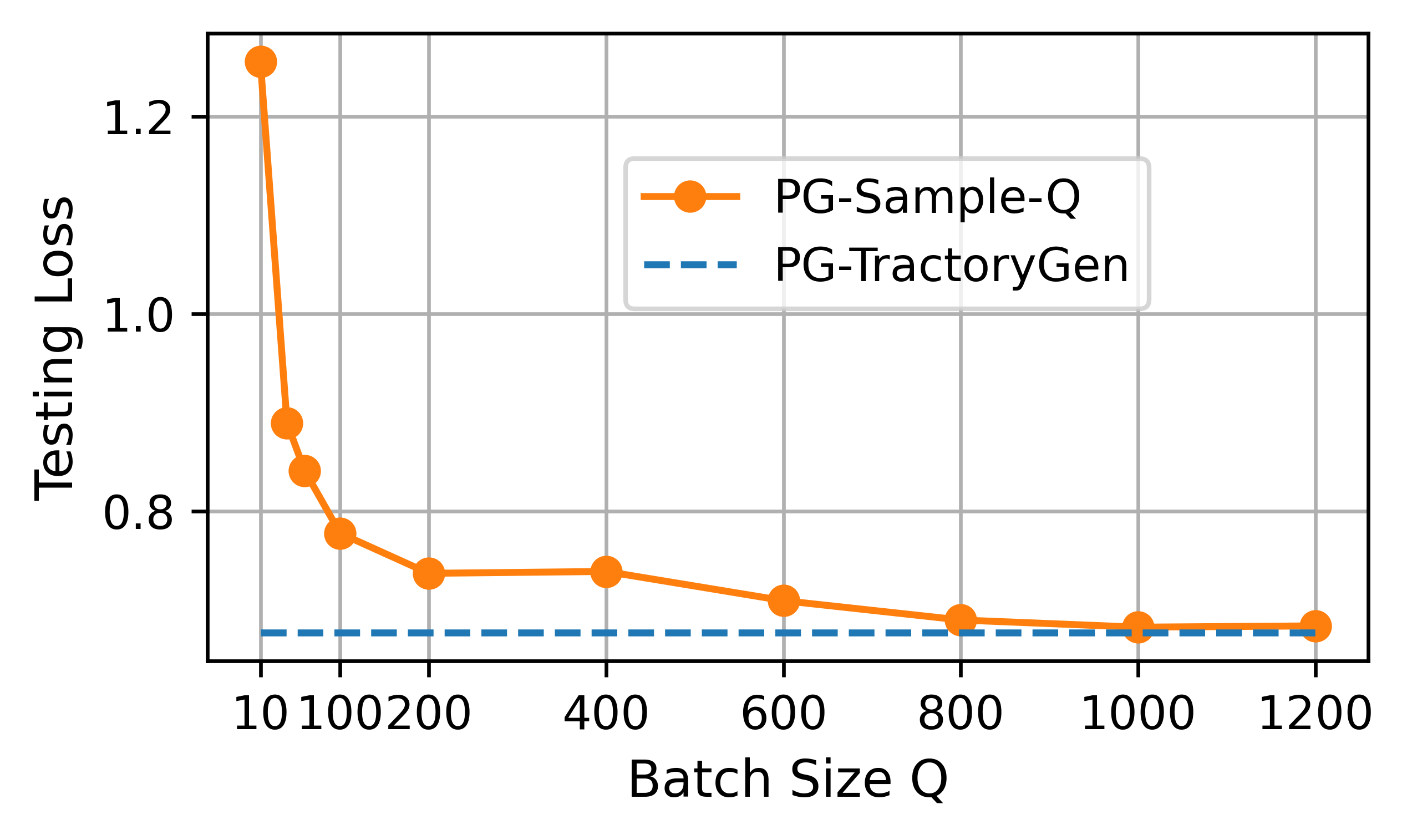}}
\subfigure[Number of samples]{\includegraphics[width=1.8in]{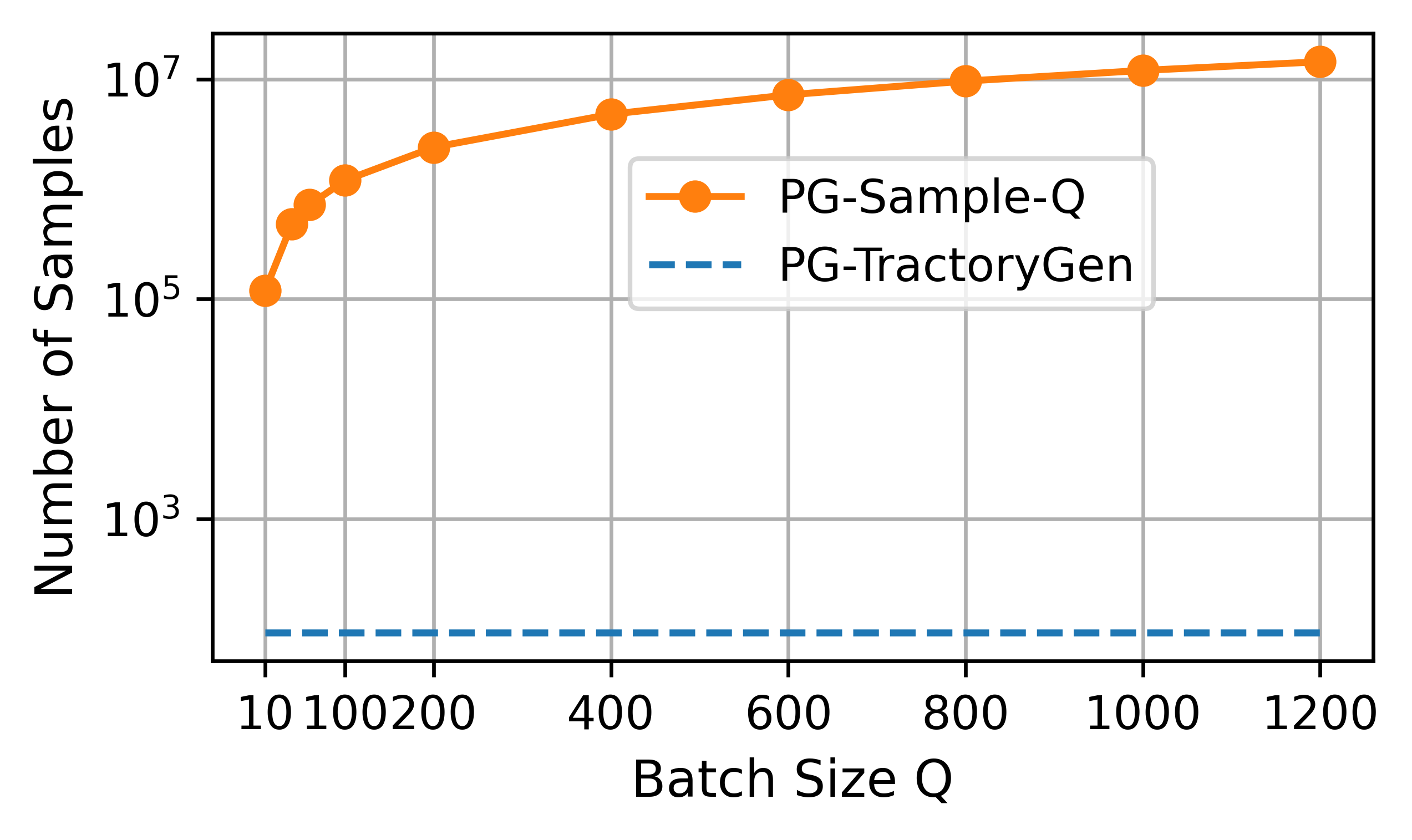}}
\vspace*{-3mm}
\caption{Performance of learning state-feedback controllers in the batch reactor system. PG-TrajectoryGen achieves the same loss as PG-Sample-1200 with much less samples.}\label{fig:Loss_AB_full_main}
\end{figure}

\vspace{-0.8cm}
\paragraph{Output-feedback control in a power distribution network.}
We further conduct experiments on the voltage control problem in IEEE 33bus test feeder~\citep{baran1989network, cui2022decentralized}, where $\bm{x}(t)\in\real^{32}$. We assume only 20 buses are measured and controlled, so $\bm{y}(t),\bm{u}(t)\in\real^{20}$. The observability matrix $\mathcal{O}_{[0, T_0]}(\bm{A}, \bm{C})$ becomes full column rank when $T_0=3$. The time horizon of trajectory is $K=20$. According to the trajectory generation algorithm developed in Subsection~\ref{subsec: TrajGenPartial}, we setup $T=K+T_0-1=22$ and collect historic trajectory of length $L = (m + 1)T -1 + n=493$. With a sampling period of 1s, data collection takes 493s. Figure~\ref{fig:Loss_voltage_partial_main}(a)-(c) compares the training loss, testing loss and the number of samples, respectively.  PG-TrajectoryGen achieves the same training and testing loss as PG-Sample-1000 with much smaller number of samples.

\vspace{-0.1cm}
\begin{figure}[H]
\centering
\subfigure[Training Loss]{\includegraphics[width=1.8in]{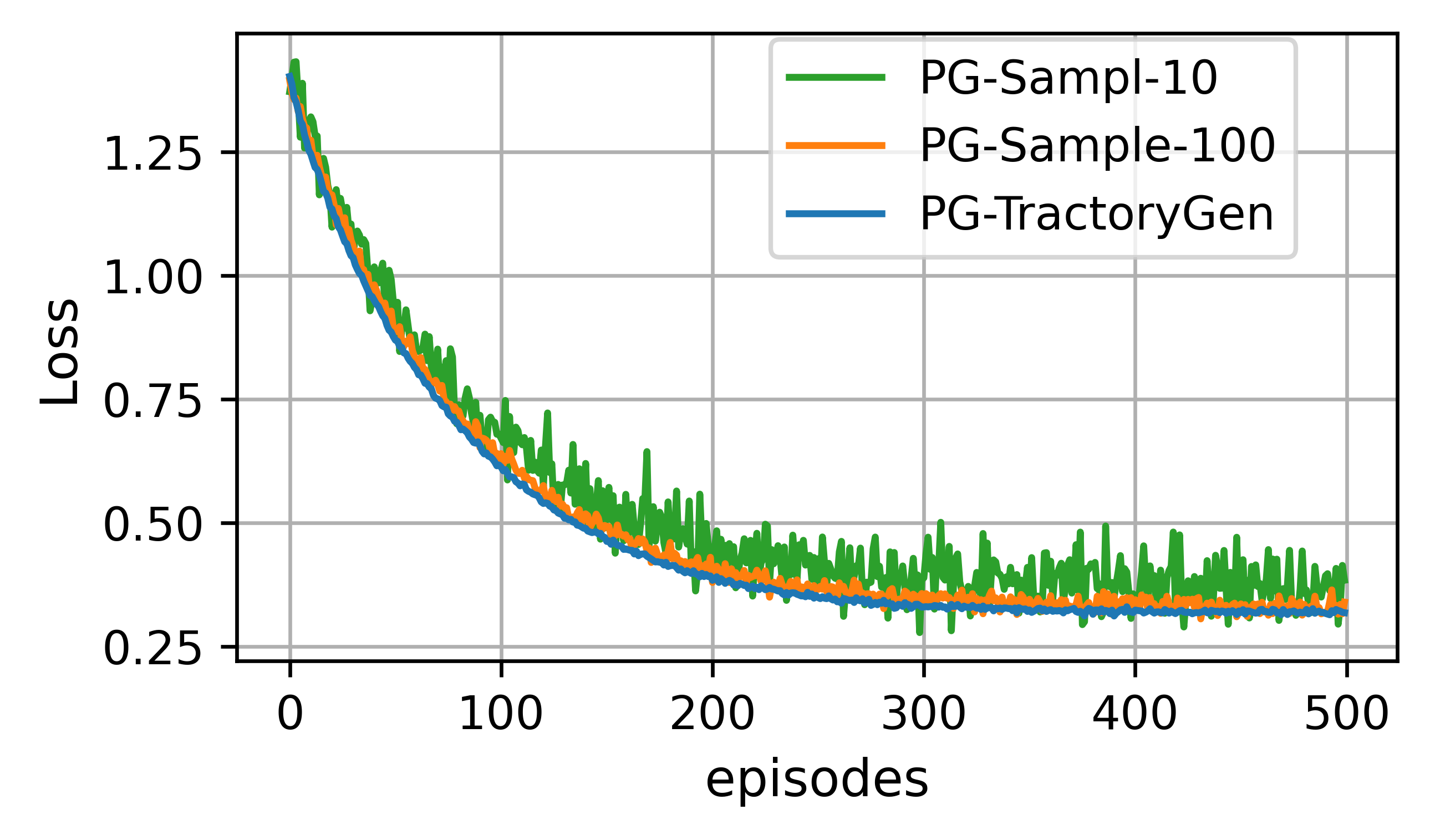}}
\subfigure[Testing Loss]{\includegraphics[width=1.8in]{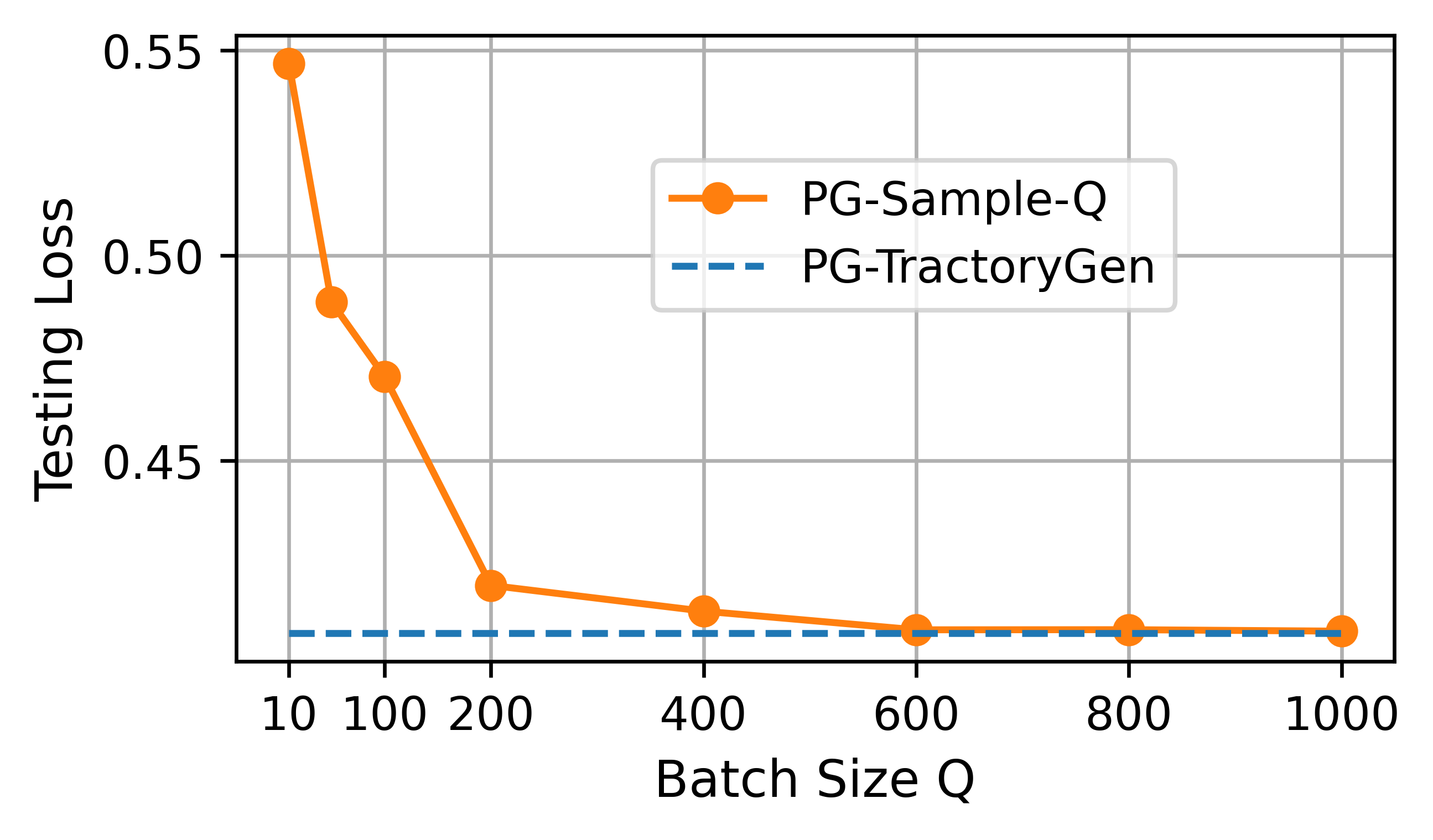}}
\subfigure[Number of samples]{\includegraphics[width=1.8in]{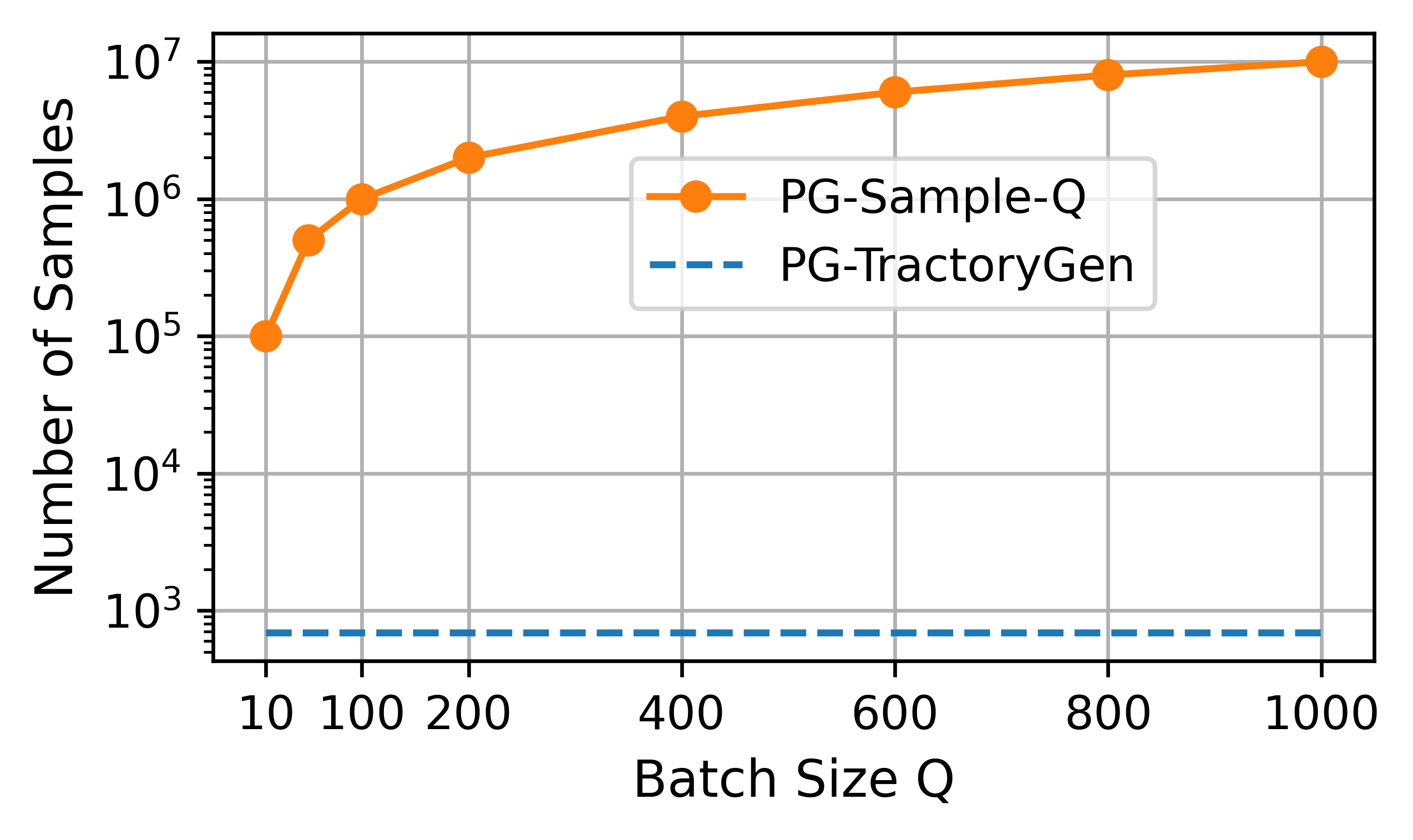}}
\vspace*{-3mm}
\caption{Performance of learning output-feedback controllers in power distribution network. PG-TrajectoryGen achieves the same loss as PG-Sample-1000 with much less samples.}\label{fig:Loss_voltage_partial_main}
\end{figure}

\vspace{-1.2cm}
\section{Conclusions}
This paper proposes a trajectory generation algorithm for learning linear feedback controllers in linear systems.
We prove that the algorithm generates trajectories with the exact distribution as if they are sampled by interacting with the real system using the updated control policy. In particular, the algorithm extends to systems where the states are not directly observed.  This is done by equivalently defining system transition dynamics using input-output trajectories.  Experiments show that the proposed method significantly reduces the number of sampled data needed for RL algorithms. 

\bibliography{ref}
\newpage
\section{Appendix}
\label{sec: appendix}

\subsection{Fundemental Lemma}\label{app: fundemental}

For state feedback control,  $\bm{y}(t)=\bm{x}(t)$ ($\bm{C}=\bm{I}_n$) and the system transition dynamics is reduced to $\bm{x}(k+1) =\bm{A} \bm{x}(k)+\bm{B} \bm{u}(k)$.  Expanding the dynamics for $k=0, \cdots, T$ gives the input-state response  over $[0, T-1]$ as 
\begin{equation}\label{eq: stack_transition}
\begin{bmatrix}
\bm{u}_{[0, T-1]} \\
\bm{x}_{[0, T-1]}
\end{bmatrix}=\begin{bmatrix}
\bm{I}_{T m} & \mathbbold{0}_{T m \times n} \\
\mathcal{T}_{[0,T-1]} & \mathcal{O}_{[0,T-1]}
\end{bmatrix}
\begin{bmatrix}
\bm{u}_{[0, T-1]} \\
\bm{x}(0)
\end{bmatrix},
\end{equation}
where $\mathcal{T}_{[0, T-1]}$  and $\mathcal{O}_{[0, T-1]}$ are the Toeplitz and observability matrices of order $T$ represented as~\citep{de2019formulas}
$$
\begin{aligned}
\mathcal{T}_{[0, T-1]} :=\left[\begin{array}{ccccc}
\bm{B} & \mathbbold{0}_{n\times m}  & \mathbbold{0}_{n\times m} & \cdots & \mathbbold{0}_{n\times m} \\
\bm{A} \bm{B} & \bm{B} & \mathbbold{0}_{n\times m} & \cdots & \mathbbold{0}_{n\times m} \\
\vdots & \vdots & \vdots & \ddots & \vdots \\
\bm{A}^{T-2} \bm{B} & \bm{A}^{T-3} \bm{B} & \bm{A}^{T-4} \bm{B} & \cdots & \mathbbold{0}_{n\times m}
\end{array}\right] 
\quad\mathcal{O}_{[0, T-1]} :=\left[\begin{array}{c}
\bm{I}_{n} \\
\bm{A} \\
\vdots \\
\bm{A}^{T-1}
\end{array}\right].
\end{aligned}
$$

On this basis, the Hankel Matrix $\mathcal{H}$ can be represented as
\begin{equation}\label{eq:Hankel_u_x(k)ransition}
\begin{bmatrix}
\bm{U}_{0, T, L-T+1} \\
\bm{X}_{0, T, L-T+1}
\end{bmatrix}
=\begin{bmatrix}
I_{T m} & \mathbbold{0}_{T m \times n} \\
\mathcal{T}_{[0,T-1]} & \mathcal{O}_{[0,T-1]}
\end{bmatrix}
\begin{bmatrix}
\bm{U}_{0, T, L-T+1} \\
X_{0, L-T+1}
\end{bmatrix},   
\end{equation}
where 
$X_{0, L-T+1}:=\left[\begin{array}{llll}
\bm{x}_d(0) & \bm{x}_d(1) & \ldots & \bm{x}_d(L-T)
\end{array}\right].$

Now consider a trajectory $[
\hat{\bm{u}}_{[0, T-1]}; 
\hat{\bm{x}}_{[0, T-1]}
]$ starting from an initial state $\hat{\bm{x}}(0)$ and evolves with the sequence of actions $\hat{\bm{u}}_{[0, T-1]}$. If $\begin{bmatrix}
\bm{U}_{0, T, L-T+1} \\
X_{0, L-T+1}
\end{bmatrix}$ is full row rank, namely $\text{rank}(\begin{bmatrix}
\bm{U}_{0, T, L-T+1} \\
X_{0, L-T+1}
\end{bmatrix})=n+Tm$, then there exists $\hat{\bm{g}}\in\real^{L-T+1}$ such that  $\begin{bmatrix}
\hat{\bm{u}}_{[0, T-1]} \\
\hat{\bm{x}}(0)
\end{bmatrix}=\begin{bmatrix}
\bm{U}_{0, T, L-T+1} \\
X_{0, L-T+1}
\end{bmatrix}\hat{\bm{g}}$. By~\eqref{eq: stack_transition},  
\begin{equation}
\begin{split}
\begin{bmatrix}
\hat{\bm{u}}_{[0, T-1]}\\ 
\hat{\bm{x}}_{[0, T-1]}
\end{bmatrix}
&=\begin{bmatrix}
\bm{I}_{T m} & \mathbbold{0}_{T m \times n} \\
\mathcal{T}_{[0,T-1]} & \mathcal{O}_{[0,T-1]}
\end{bmatrix}
\begin{bmatrix}
\bm{U}_{0, T, L-T+1} \\
X_{0, L-T+1}
\end{bmatrix}\hat{\bm{g}}\\
&=\begin{bmatrix}
\bm{U}_{0, T, L-T+1} \\
\bm{X}_{0, T, L-T+1}
\end{bmatrix}\hat{\bm{g}},
\end{split}
\end{equation}
where the second equation follows from the relation in~\eqref{eq:Hankel_u_x(k)ransition}. This complete the proof of Lemma~\eqref{lem:fundemental}.

\subsection{Policy gradient algorithm}\label{app: PG_state_feedback}

\begin{algorithm}[H]
 \caption{Policy Gradient with trajectory generation  }
 \label{alg: PG}
 \begin{algorithmic}[1]
 \renewcommand{\algorithmicrequire}{\textbf{Require: }}
 \renewcommand{\algorithmicensure}{\textbf{Data collection:}}
 \REQUIRE  The length $T$ of trajectory, the learning rate $\alpha$, total number of episode $I$\\
  \renewcommand{\algorithmicensure}{\textbf{Policy Gradient with Data generation:}}
 \ENSURE 
 \textit{Initialisation} :Initial weights $\bm{\theta}$ for control network
 \FOR {$episode = 1$ to $I$}
\STATE Generate a batch of  $Q$ trajectories $ \left[\bm{\tau}_1,\cdots,\bm{\tau}_Q \right]=$\texttt{TrajectoryGen}$\left(\mathcal{H},\bm{\theta},\mathcal{D},Q\right) $ 
 \STATE Compute the gradient $ \nabla J(\bm{\theta}) =  \frac{1}{Q}\sum_{i=1}^{Q} c(\bm{\tau}_i)\sum_{t=1}^{T} \nabla_{\bm{\theta}}\log\pi_{\bm{\theta}} (\Tilde{\bm{u}}_i(t)|\Tilde{\bm{x}}_i(t)) $
  \STATE  Update weights in the neural network by gradient descent:
  $\bm{\theta} \leftarrow \bm{\theta}-\alpha \nabla J(\bm{\theta}) $
  \ENDFOR
 \end{algorithmic} 
 \end{algorithm}

\subsection{Transition dynamics with extended states}\label{app: transition_partial}
Expanding the transition dynamics in~\eqref{eq:sys_full} from time $0$ to $T_0$ gives

\begin{equation}\label{eq:y_u_traj_transition}
    \bm{y}_{[0, T_0-1]}=\mathcal{O}_{[0, T_0-1]}\bm{x}(0)+\mathcal{T}_{[0, T_0-1]}\bm{u}_{[0, T_0-2]}\end{equation}
where $\mathcal{T}_{[0, T-1]}$  and $\mathcal{O}_{[0, T-1]}$ are the Toeplitz and observability matrices of order $T_0$ represented as
$$
\begin{aligned}
\mathcal{T}_{[0, T_0-1]} &:=\left[\begin{array}{ccccc}
\bm{C}\bm{B}&\mathbbold{0}_{d\times m}&\mathbbold{0}_{d\times m}& \cdots &\mathbbold{0}_{d\times m}\\
\bm{C} \bm{A}\bm{B}& \bm{C}\bm{B}&\mathbbold{0}_{d\times m}& \cdots &\mathbbold{0}_{d\times m}\\
\vdots & \vdots & \vdots & \ddots & \vdots \\
\bm{C} \bm{A}^{T_0-2}\bm{B}& \bm{C}\bm{A}^{T_0-3}\bm{B}& \bm{C}\bm{A}^{T_0-4}\bm{B}& \cdots & \mathbbold{0}_{d\times m}
\end{array}\right] \quad
\mathcal{O}_{[0, T_0-1]} :=\left[\begin{array}{c}
\bm{C} \\
\bm{C}\bm{A} \\
\vdots \\
\bm{C}\bm{A}^{T_0-1}
\end{array}\right].
\end{aligned}
$$

Since  the system~\eqref{eq:sys_full} is observable,  $\mathcal{O}_{[0, T_0-1]}$ is full column rank. Thus,
\begin{equation}
    \bm{x}(0)=\underbrace{\left(\mathcal{O}_{[0, T_0-1]}^\top\mathcal{O}_{[0, T_0-1]}\right)^{-1}
    \mathcal{O}_{[0, T_0-1]}^{\top}}_{\mathcal{O}_{[0, T_0-1]}^\dagger} \left(\bm{y}_{[0, T_0-1]}-\mathcal{T}_{[0, T_0-1]}\bm{u}_{[0, T_0-2]}\right)
\end{equation}



Then pluging in the expression of $\bm{y}(T_0)$ yields
\begin{equation}\label{eq:y_T_0}
\begin{split}
     \bm{y}(T_0)&=\bm{C}\bm{A}^{T_0}\bm{x}(0)+\bm{C}\bm{A}^{T_0-1}\bm{B}\bm{u}(0)+\cdots+\bm{C}\bm{B}\bm{u}(T_0-1)\\
     &=\bm{C}\bm{A}^{T_0}\mathcal{O}_{[0, T_0-1]}^\dagger \left(\bm{y}_{[0, T_0-1]}-\mathcal{T}_{[0, T_0-1]}\bm{u}_{[0, T_0-2]}\right)+\mathcal{T}_{[T_0, T_0]}\bm{u}_{[0, T_0-2]}\\
     &= \bm{C}\bm{A}^{T_0}\mathcal{O}_{[0, T_0-1]}^\dagger \bm{y}_{[0, T_0-1]}+\left(\mathcal{T}_{[T_0, T_0]}-\bm{C}\bm{A}^{T_0}\mathcal{O}_{[0, T_0-1]}^\dagger \mathcal{T}_{[0, T_0-1]} \right)\bm{u}_{[0, T_0-2]}
\end{split}
\end{equation}
where 
$$
\begin{aligned}
\mathcal{T}_{[T_0, T_0]} &:=\left[\begin{array}{ccccc}
\bm{C}\bm{A}^{T_0-1} \bm{B} & \bm{C}\bm{A}^{T_0-2} \bm{B} & \bm{C}\bm{A}^{T_0-3} \bm{B} & \cdots & \bm{C}\bm{B}\\
\end{array}\right].
\end{aligned}
$$

Stacking the observations and the outputs together yields 
\begin{equation}
  \left[\begin{array}{c}
\bm{y}(1)\\
\vdots \\
\bm{y}(T_0)\\
\bm{u}(1)\\
\vdots \\
\bm{u}(T_0-1)\\
\end{array}\right]=\Tilde{\bm{A}} \left[\begin{array}{c}
\bm{y}(0)\\
\vdots \\
\bm{y}(T_0-1)\\
\bm{u}(0)\\
\vdots \\
\bm{u}(T_0-2)\\
\end{array}\right]
+\Tilde{\bm{B}}\bm{u}(T_0-1),
\end{equation}

where
\begin{equation}
\begin{split}
&\Tilde{\bm{A}}=
\begin{pmatrix}
  \begin{matrix}
\bm{0}& \bm{I}_d & \bm{0}& \bm{0}&\cdots & \bm{0}\\
\bm{0}& \bm{0}& \bm{I}_d & \bm{0}& \cdots & \bm{0}\\
\vdots & \vdots& \vdots & \vdots & \ddots & \vdots \\
\bm{0}& \bm{0} & \bm{0}& \bm{0}& \cdots & \bm{I}_d 
  \end{matrix}
  & \rvline & \bigzero \\
  \hline
  \bm{C}\bm{A}^{T_0}\mathcal{O}_{[0, T_0-1]}^\dagger& \rvline & \mathcal{T}_{[T_0, T_0]}-\bm{C}\bm{A}^{T_0}\mathcal{O}_{[0, T_0-1]}^\dagger \mathcal{T}_{[0, T_0-1]} \\
\hline
  \bigzero & \rvline &
  \begin{matrix}
\bm{0}& \bm{I}_m & \bm{0}& \bm{0}&\cdots & \bm{0}\\
\bm{0}& 0& \bm{I}_m & \bm{0}& \cdots & \bm{0}\\
\vdots & \vdots& \vdots & \vdots & \ddots & \vdots \\
\bm{0}& 0& \bm{0}& \bm{0}& \cdots & \bm{I}_m \\
  \end{matrix} \\
  \hline
  \bigzero & \rvline & \bigzero 
\end{pmatrix},
\Tilde{\bm{B}}=
\begin{pmatrix}
  \begin{matrix}
\bm{0}\\
\bm{0}\\
\vdots  \\
\bm{0}
  \end{matrix}\\
  \hline
 \bm{0}\\
\hline
  \begin{matrix}
\bm{0}\\
\bm{0}\\
\vdots  \\
\bm{0}
  \end{matrix}\\
  \hline
   \bm{I}_m 
\end{pmatrix}.    
\end{split}
\end{equation}



\subsection{Policy Gradient for Extended States }~\label{app:Policy Gradient}
The basic policy gradient algorithm is~\citet{Sutton2018RL} 
\begin{equation}
\begin{split}
 \nabla_{\bm{\theta}} \mathbb{E}_{\bm{\tau} \sim p_{\bm{\pi}_{\bm{\theta}}}}[c(\bm{\tau})] &=\nabla_{\bm{\theta}} \int \pi_{\bm{\theta}}(\bm{\tau}) c(\bm{\tau}) d \tau \\
&=\int \nabla_{\bm{\theta}} \pi_{\bm{\theta}}(\bm{\tau}) c(\bm{\tau}) d \tau \\
&=\int \pi_{\bm{\theta}}(\bm{\tau}) \nabla_{\bm{\theta}} \log \pi_{\bm{\theta}}(\bm{\tau}) c(\bm{\tau}) d \tau \\
 &=\mathbb{E}_{\bm{\tau} \sim p_{\bm{\pi}_{\bm{\theta}}}}[c(\bm{\tau}) \nabla_{\bm{\theta}} \log \pi_{\bm{\theta}}(\bm{\tau})].  
\end{split}
\end{equation}

The probability of a trajectory with length $T$ is 
\begin{equation}\label{eq:prob_traj}
   \pi_\theta(\bm{\tau})= p\left(\mathcal{X}\left(T_0\right)\right) \prod_{t=T_0}^{T-1} p_{\pi_{\bm{\theta}}}\left(\bm{u}(t)|\mathcal{X}(t)\right) p\left(\mathcal{X}(t+1)|\mathcal{X}(t), \bm{u}(t)\right) .
\end{equation}

Expanding the terms in~\eqref{eq:prob_traj} and canceling the transition probability independent of $\bm{\theta}$, we have
\begin{equation}
\nabla_{\bm{\theta}}\log \pi_\theta(\bm{\tau}) =\sum_{t=0}^{T-1} \nabla_{\bm{\theta}}\log p_{\bm{\pi}_{\bm{\theta}}}\left(\bm{u}(k) \mid \bm{y}(k)\right),
\end{equation}
and therefore, 
\begin{equation}
    \nabla_{\bm{\theta}}\mathbb{E}_{\pi_\theta}[c(\bm{\tau})] =\mathbb{E}_{\bm{\tau} \sim p_{\bm{\pi}_{\bm{\theta}}}(\bm{\tau})}\left[c(\bm{\tau})\sum_{k=0}^{K-1} \nabla_{\bm{\theta}} \log p_{\bm{\pi}_{\bm{\theta}}}\left(\bm{u}(k) \mid \bm{y}(k)\right)\right]
\end{equation}

Hence, the policy gradient algorithm still holds for the output-feedback case.

\subsection{Proof of Theorem~\ref{thm: unque_trajectory0_partial}}\label{app:thm_partial}
To prove Theorem~\ref{thm: unque_trajectory0_partial}, we need to make use of the rank condition of $\bm{G}_{\bm{\theta}}$ and  $\bm{R}$ in~\eqref{eq:traj_g_partial}.  We first
show in Lemma~\ref{lem:null_partial} about the null space and rank condition induced from the condition $\text{rank}(\mathcal{H})=n+Tm$.
\begin{lemma}\label{lem:null_partial}
If the observability matrix  $\mathcal{O}_{[0, T_0-1]}$ is 
full column rank, then 
 the null space $\mathcal{N}(\bm{G}_{\bm{\theta}})$ is  the same as $\mathcal{N}(\mathcal{H})$. Moreover, if $\text{rank}(\mathcal{H})=n+Tm$, then  $\text{rank}(\bm{G}_{\bm{\theta}})=n+Tm$. 
\end{lemma}

\begin{proof}
We first prove that the null space $\mathcal{N}(\bm{G}_{\bm{\theta}})$ is  the same as $\mathcal{N}(\mathcal{H})$ from \textit{(i)}  and \textit{(ii)} :

\textit{(i)} For all $\bm{q}\in \mathcal{N}(\mathcal{H})$,  we have $[\mathcal{H}_y]\bm{q} =\mathbbold{0}_{Tn}$ and $[\mathcal{H}_u]\bm{q} =\mathbbold{0}_{Tm}$. Plugging in the expression of $\bm{G}_{\bm{\theta}}$ yields  $\bm{G}_{\bm{\theta}}\bm{q} = \mathbbold{0}_{Tm+n} $. Namely, $\bm{q}\in\mathcal{N}(\bm{G}_{\bm{\theta}})$.

\textit{(ii)} For all  $\bm{v}\in \mathcal{N}(\bm{G}_{\bm{\theta}})$,  we have
\begin{equation}
\left[\begin{array}{c}
\mathcal{H}_u^{T_0 -1: T-1} -\left(\bm{I}_{T-T_0}\otimes \theta\right) \mathcal{H}_y^{T_0 -1: T-1} \\
\mathcal{H}_y^{0 : T_0-1}\\
\mathcal{H}_u^{0 : T_0-2}
\end{array}\right]\bm{v}=\mathbbold{0}_{Tm+T_0 d} \; , 
\end{equation}
which gives
\begin{equation}\label{eq:fullC_null2}
\begin{split}
&\mathcal{H}_u^k \bm{v}=\theta\mathcal{H}_y^k \bm{v} \text{ for  } k=T_0,\cdots,T-1\\
&\mathcal{H}_y^{0 : T_0-1} \bm{v} =\mathbbold{0}_{T_0 d} \\
&\mathcal{H}_u^{0 : T_0-2} \bm{v} =\mathbbold{0}_{T_0 m} \;.  
\end{split}    
\end{equation}

From~\eqref{eq:y_T_0},  we have ,
\begin{equation}\label{eq:induction_y_u}
\begin{split}
   \mathcal{H}_y^{k}
   &=\bm{C}\bm{A}^{T_0}\mathcal{O}_{[0, T_0-1]}^\dagger\mathcal{H}_y^{k-T_0:k-1}+\left(\mathcal{T}_{[T_0, T_0]}-\bm{C}\bm{A}^{T_0}\mathcal{O}_{[0, T_0-1]}^\dagger \mathcal{T}_{[0, T_0-1]} \right)\mathcal{H}_u^{k-T_0:k-2} 
\end{split}
\end{equation}
for $k=T_0,\cdots,T-1$.

Plugging~\eqref{eq:fullC_null2} in~\eqref{eq:induction_y_u} induces 
$\mathcal{H}_y^k \bm{v} =\mathbbold{0}_{d}$ and $\mathcal{H}_u^k \bm{v} =\mathbbold{0}_{m}$ for $ k=0,\cdots,T-1$. Hence, $\mathcal{H}\bm{v} = \mathbbold{0}_{Tm+Td} $. Namely, $\bm{v}\in\mathcal{N}(\mathcal{H})$.

Next, we prove the rank condition. Note that $\mathcal{H}\in\real^{(Tm+Tn) \times (L-T+1)}$. If $\text{rank}(\mathcal{H})=n+Tm$, then the rank of Null space is $\text{rank}(\mathcal{N}(\mathcal{H}))=(L-T+1)-(n+Tm)$.  
Since $\mathcal{N}(\bm{G}_{\bm{\theta}})$ is  the same as $\mathcal{N}(\mathcal{H})$, then $\text{rank}(\mathcal{N}(\bm{G}_{\bm{\theta}}))=(L-T+1)-(n+Tm)$.  It follows directly that $\text{rank}(\bm{G}_{\bm{\theta}})=(L-T+1)-\text{rank}(\mathcal{N}(\bm{G}_{\bm{\theta}}))=n+Tm$. 
\end{proof}

Then, we are ready to prove Theorem~\ref{thm: unque_trajectory0_partial} as follows.

\begin{proof}
We first prove the existence of the solution in~\eqref{eq:traj_g_partial}. From~\eqref{eq:y_u_traj_transition},  we have
$$
\mathcal{X}(T_0-1):=\left[\begin{array}{l}
\bm{u}_{[0, T_0-2]} \\
\bm{y}_{[0, T_0-1]}
\end{array}\right]=\left[\begin{array}{cc}
I_{(T_0-1) m} & \mathbbold{0}_{(T_0-1) m \times n} \\
\mathcal{T}_{[0,T_0-1]} & \mathcal{O}_{[0,T_0-1]}
\end{array}\right]
\begin{bmatrix}
\bm{u}_{[0, T_0-2]} \\
\bm{x}(0)
\end{bmatrix}
$$
The number of element in the vector $[\bm{u}_{[0, T_0-2]}^\top, \bm{x}^\top(0)
]^\top$ is $n+m(T_0-1)$. Hence, the rank of $\mathcal{X}(T_0-1)$ is as most $n+m(T_0-1)$. Then the right side of~\eqref{eq:traj_g_partial} has the rank as most $n+m(T_0-1)+m(T-T_0+1)=n+Tm$. By Lemma~\ref{lem:null_partial},  $\text{rank}(\mathcal{H})=n+Tm$ yields $\text{rank}(\bm{G}_{\bm{\theta}})=n+Tm$. Hence, there exists at least one solution such that~\eqref{eq:traj_g0} holds.

Next, we show the uniqueness of the generated trajectory. Suppose there exists $\bm{g}_1$ and $\bm{g}_2$ are both solution of~\eqref{eq:traj_g0} and $\mathcal{H}\bm{g}_1\neq \mathcal{H}\bm{g}_2$. Since $\bm{g}_1$ and $\bm{g}_2$ are both solution of~\eqref{eq:traj_g0}, then $\bm{G}_{\bm{\theta}}\bm{g}_1=\bm{G}_{\bm{\theta}}\bm{g}_1$ and thus $\left(\bm{g}_1-\bm{g}_2\right)\in\mathcal{N}(\bm{G}_{\bm{\theta}})$.
On the other hand, $\mathcal{H}\bm{g}_1\neq \mathcal{H}\bm{g}_2$ yields $\mathcal{H}\left(\bm{g}_1-\bm{g}_2\right)\neq 0$ and thus $\left(\bm{g}_1-\bm{g}_2\right)\notin\mathcal{N}(\mathcal{H})$. This contradicts that $\mathcal{N}(\bm{G}_{\bm{\theta}})$ is  the same as $\mathcal{N}(\mathcal{H})$ proved in
Lemma~\ref{lem:null_partial}. Hence, $\mathcal{H}\bm{g}_1= \mathcal{H}\bm{g}_2$, namely, the generated trajectories are identical.
\end{proof}

\subsection{Trajectory generation algorithm for output-feedback control}\label{app:algorithm}

\begin{algorithm}[H]
 \caption{Trajectory generation for output-feedback control}
 \label{alg: TrajGenFull_partial}
 \begin{algorithmic}[1]
 \renewcommand{\algorithmicrequire}{\textbf{Require: }}
 \renewcommand{\algorithmicensure}{\textbf{Data collection:}}
 \ENSURE Collect historic measurement of the system and stack each $T$-length input-output trajectory  as Hankel matrix $\mathcal{H}$ shown in~\eqref{eq:Hankel_u_x} until  $\text{rank}(\mathcal{H})=n+Tm$ \\
  \renewcommand{\algorithmicensure}{\textbf{Data generation:}}
 \ENSURE 
 \textit{Input} :Hankel matrix $\mathcal{H}$, weights $\bm{\theta}$ and the distribution $\mathcal{D}$ for the control policy,  the batchsize $Q$ for the generated  trajectories, the set $\mathcal{S}_\mathcal{X}$ of historic initial extended state $\mathcal{X}(T_0-1) $ \\
     \SetKwFunction{FMain}{TrajectoryGen}
    \SetKwProg{Fn}{Function}{:}{}
    \Fn{\FMain{$\mathcal{H},\bm{\theta},\mathcal{D},Q ,\mathcal{S}_\mathcal{X} $}}{
          Plug in $\bm{\theta}$ to compute $\bm{G}_{\bm{\theta}}$ in~\eqref{eq:traj_g_partial}.\\
        Conduct eigenvalue decomposition of $\left(\bm{G}_{\bm{\theta}} \bm{G}_{\bm{\theta}}^\top\right)$ to obtain $\bm{P}_{\bm{\theta}}:=\begin{bmatrix}
        \bm{p}_1 & \bm{p}_2 &\cdots& \bm{p}_s
        \end{bmatrix}$ and $\bm{\Lambda}=\text{diag}(\lambda_1,\lambda_2,\cdots,\lambda_s)$ with $\lambda_i$ being nonzero eigenvalues and $\bm{p}_i$ being orthonormal eigenvectors.\\
        \For{$i = 1$ to $Q$}
        {Sample $\mathcal{X}(T_0-1)$ from $\mathcal{S}_\mathcal{X} $. Sample $\left\{\bm{w}_i(T_0-1),\cdots,\bm{w}_i(T-1)\right\}$ from distribution $\mathcal{D}$.\\
        Compute the  coefficient $\bm{g}_i^*=\bm{G}_{\bm{\theta}}^\top\bm{P}_{\bm{\theta}}\bm{\Lambda}^{-1}\bm{P}_{\bm{\theta}}^\top\left[
        \bm{w}_i(T_0-1)^\top 
        \cdots
        \bm{w}_i(T-1)^\top
        \mathcal{X}(T_0-1)^\top
        \right]^\top$.
        \\
        Generate the $i$-th trajectory    $\bm{\tau}_i:=\left[
        \Tilde{\bm{u}}_i(T_0-1)^\top 
        \cdots
        \Tilde{\bm{u}}_i(T-1)^\top
        \Tilde{\bm{y}}_i(T_0-1)^\top 
        \cdots
        \Tilde{\bm{y}}_i(T-1)^\top
        \right]^\top=\begin{bmatrix}
        \mathcal{H}_u^{T_0 -1: T-1}\\
        \mathcal{H}_y^{T_0 -1: T-1}
        \end{bmatrix}\bm{g}_i^*$.

        }
        \textbf{return} $ \left[\bm{\tau}_1,\cdots,\bm{\tau}_Q \right]$ 
        }
 \end{algorithmic} 
 \end{algorithm}

 \subsection{Control of a batch reactor system }\label{app:exp_reactor}
We use the discretized version of a batch reactor system in~\citep{de2019formulas} as an illustrative example. The state transition matrix is given in~\eqref{eq:example_AB}, where $\bm{x}(t)\in\real^4, \bm{u}(t)\in\real^2$. We aim to train a linear feedback controller to minimize  the cost $ J(\bm{\theta}) = \sum_{k=1}^{K}\left\| \bm{y}(k)\right\|_1 +0.1 \left\|\bm{u}_{\bm{\theta}}(k)\right\|_1$ for the trajectory in the time horizon $K=30$. The number of episode in training is $E=400$.

\begin{equation}\label{eq:example_AB}
[A \mid B]=\left[\begin{array}{cccc|cc}
1.178 & 0.001 & 0.511 & -0.403 & 0.004 & -0.087 \\
-0.051 & 0.661 & -0.011 & 0.061 & 0.467 & 0.001 \\
0.076 & 0.335 & 0.560 & 0.382 & 0.213 & -0.235 \\
0 & 0.335 & 0.089 & 0.849 & 0.213 & -0.016
\end{array}\right]    
\end{equation}

\textbf{Case I: the state is directly observable.} We setup $T=K=30$ and collect historic trajectory of length $L = (m + 1)T -1 + n=93$. With a sampling period of 0.1s~\citep{de2019formulas}, the data collection takes 9.3s. Then we stack the data as a Hankel matrix~\eqref{eq:Hankel_u_x} and use Algorithm~\ref{alg: TrajGenFull} to generate trajectories with the batchsize 1200 in each training episode. 
Figure~\ref{fig:Loss_AB_full_main}(a) and Figure~\ref{fig:Loss_AB_full_main}(b) shows the average batch loss along the training episodes and the testing loss, respectively. 
 PG-Sample-Q achieves both lower training loss and testing loss with larger batchsize Q. However, this improvement comes at the expense of an increased number of samples from the system, as shown in Figure~\ref{fig:Loss_AB_full_main}(c). By contrast, 
 PG-TrajectoryGen attains the same performance as  PG-Sample-1200, since Theorem~\ref{thm: unque_trajectory0} guarantees that we generate trajectories  as if they were truly sampled on the system. Moreover, the number of samples in PG-TrajectoryGen purely comes
from the fixed historic trajectory of the length $L=93$, which is even much smaller than PG-Sample-10 where the length of samples is 120000 (equals to $10\times30\times400$).

\textbf{Case II: only the first and the second element of the state is directly observed.} In this case, $\bm{y}(k)=(\bm{x}_1(k), \bm{x}_2(k))\in\real^2$. Then $T_0=2$ such that the observability matrix $\mathcal{O}_{[0, T_0]}(\bm{A}, \bm{C})$ is full column rank. According to the trajectory generation algorithm developed in Subsection~\ref{subsec: TrajGenPartial}, we setup  $T=K+T_0-1=31$ and collect historic measurements of length $L = (m + 1)T -1 + n=96$. With a sampling period of 0.1s, the data collection takes 9.6s. Hence, the data collection does not scale significantly even when only half of the state is directly observed. Figure~\ref{fig:Loss_AB_full_main}(a)-(c) compares the training loss, testing loss and the number of samples, respectively.  PG-TrajectoryGen achieves the same training and testing loss as PG-Sample-1000 with much smaller number of samples in the system.


\begin{figure}[H]
\centering
\subfigure[Training Loss]{\includegraphics[width=1.8in]{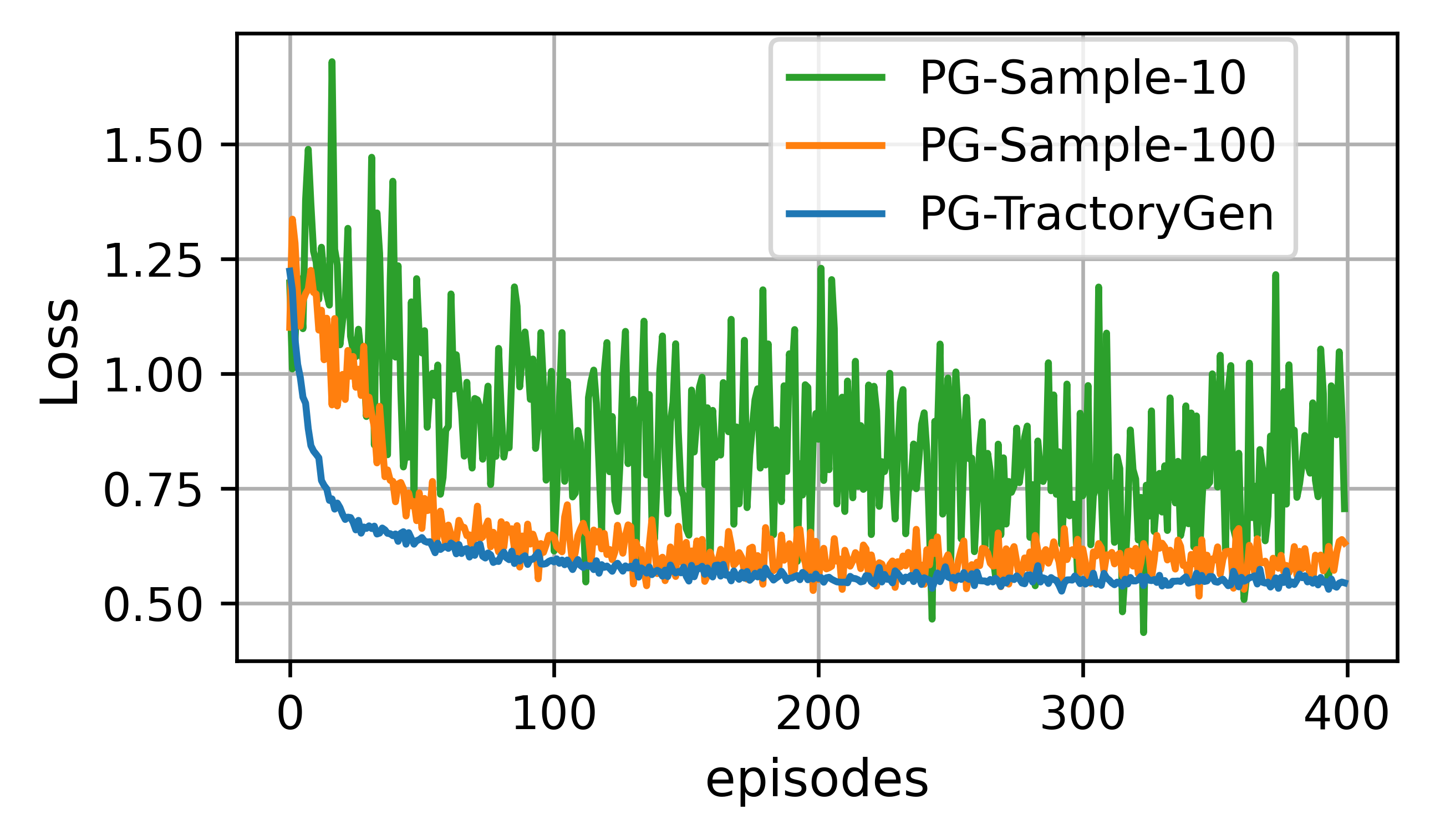}}
\subfigure[Testing Loss]{\includegraphics[width=1.8in]{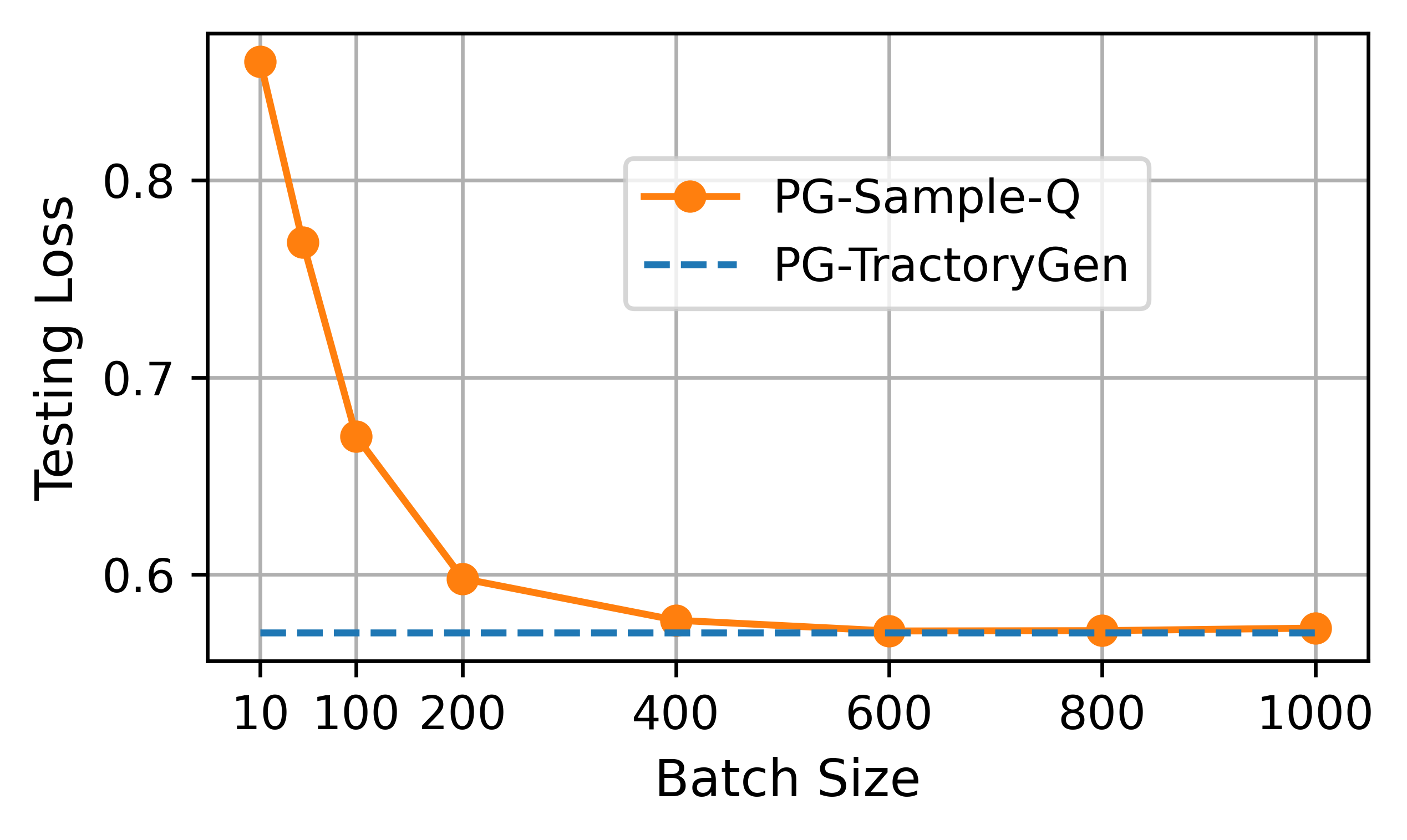}}
\subfigure[Number of samples]{\includegraphics[width=1.8in]{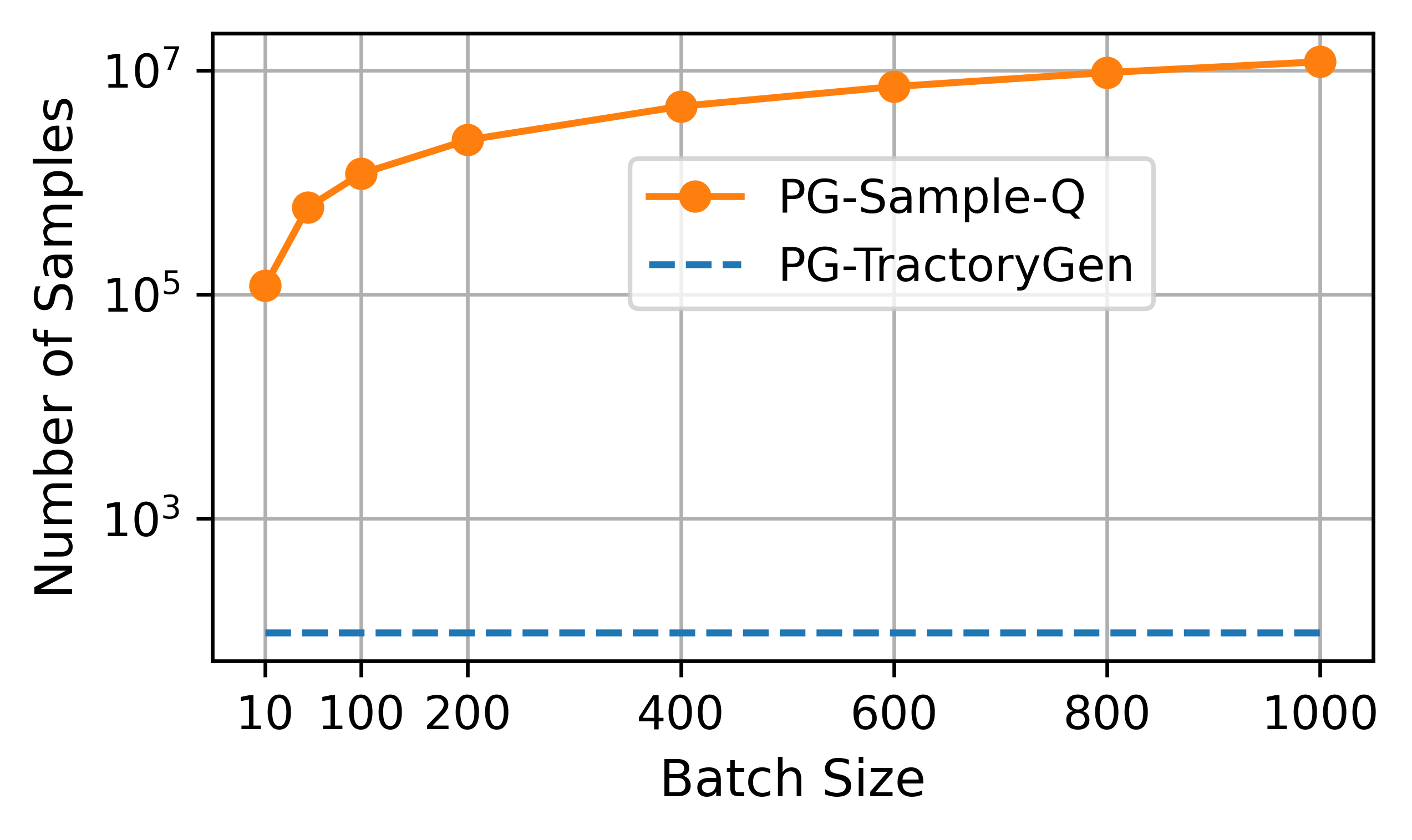}}
\caption{Performance of learning output-feedback controllers in the batch reactor system. PG-TrajectoryGen achieves the same training and testing loss as PG-Sample-1000 with much smaller number of samples on the system.}\label{fig:Loss_AB_partial}
\end{figure}

\subsection{Voltage control in power distribution networks}\label{app:exp_voltage}
To validate the performance of the proposed method on a larger system, we conduct experiments on the voltage control problem in IEEE 33bus test feeder~\citep{baran1989network}.  We adopt the Lindisflow model where the dynamics of voltage is described by a linear transition model~\citep{ cui2022decentralized, zhu2015fast}. The state $\bm{x}(t)\in\real^{32}$ is voltage in all the buses apart from the reference bus (the voltage of the reference bus is fixed). The action is the reactive power in each bus. We assume that there is no real-time communication between buses during real-time implementation, so the action at each bus can only change with the local measurement of voltage. The goal is to train a linear decentralized feedback controller to minimize total voltage deviation as well as the control effort in the time horizon $K=20$, written as $ J(\bm{\theta}) = \sum_{k=1}^{K}\left\| \bm{y}(k)\right\|_1 +0.3 \left\|\bm{u}_{\bm{\theta}}(k)\right\|_1$. The number of training episode is $E=500$.  

\textbf{Case I: the state is directly observed.} The state is directly observed so the action $\bm{u}(t)\in\real^{32}$. We setup $T=K=20$ and collect historic trajectory of the length $L = (m + 1)T -1 + n=691$. With a sampling period of 1s~\citep{chen2021enforcing}, the data collection takes 691s.
In each  episode of training, we use Algorithm~\ref{alg: TrajGenFull} to generate trajectories with the batchsize 1000. Figure~\ref{fig:Loss_voltage_full}(a)-(c) compares the training loss, testing loss and the number of samples, respectively.  PG-TrajectoryGen achieves the same training and testing loss as PG-Sample-1000 with much smaller number of samples on the system.

\textbf{Case II: only  $20$ elements in the state is observed. }  We assume only 20 buses are measured and controlled, so $\bm{y}(t),\bm{u}(t)\in\real^{20}$. The observability matrix $\mathcal{O}_{[0, T_0]}(\bm{A}, \bm{C})$ becomes full column rank when $T_0=3$. The time horizon of trajectory is $K=20$. According to the trajectory generation algorithm developed in Subsection~\ref{subsec: TrajGenPartial}, we setup $T=K+T_0-1=22$ and collect historic trajectory of length $L = (m + 1)T -1 + n=493$. With a sampling period of 1s~\citep{chen2021enforcing}, the data collection takes 493s. Figure~\ref{fig:Loss_voltage_partial_main}(a)-(c) compares the training loss, testing loss and the number of samples, respectively.  PG-TrajectoryGen achieves the same training and testing loss as PG-Sample-1000 with much smaller number of samples on the system.


\begin{figure}[H]
\centering
\subfigure[Training Loss]{\includegraphics[width=1.8in]{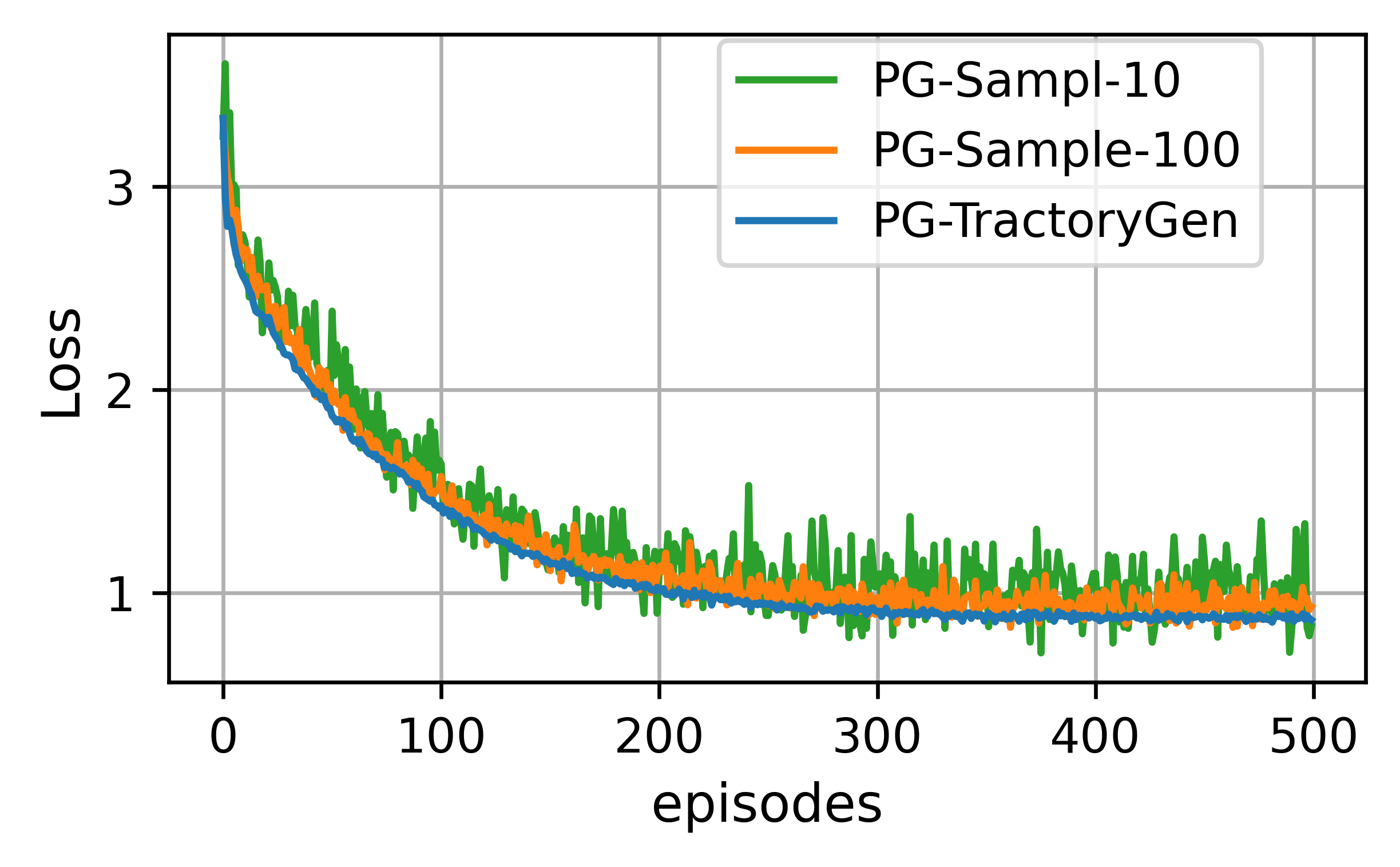}}
\subfigure[Testing Loss]{\includegraphics[width=1.8in]{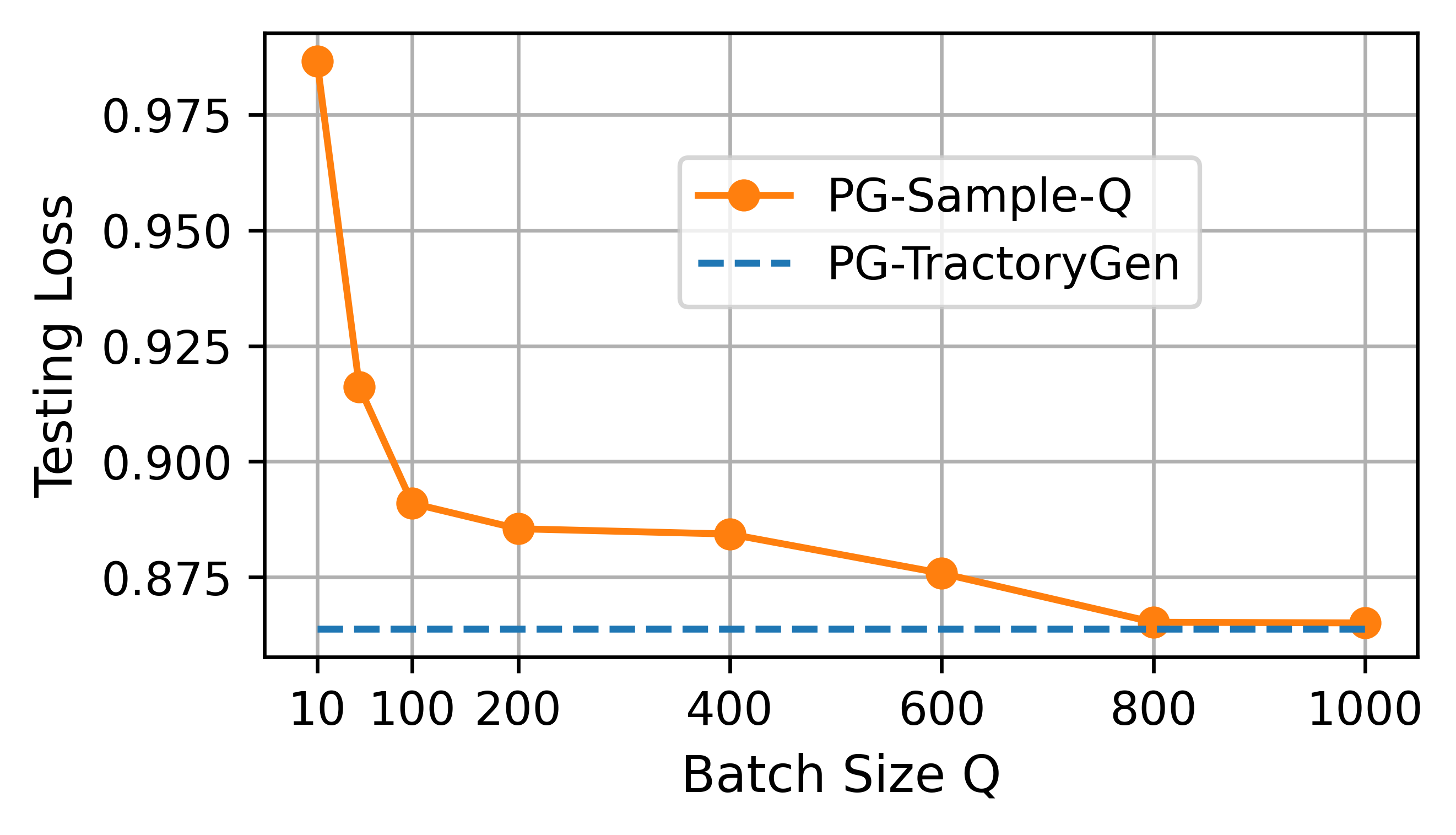}}
\subfigure[Number of samples]{\includegraphics[width=1.8in]{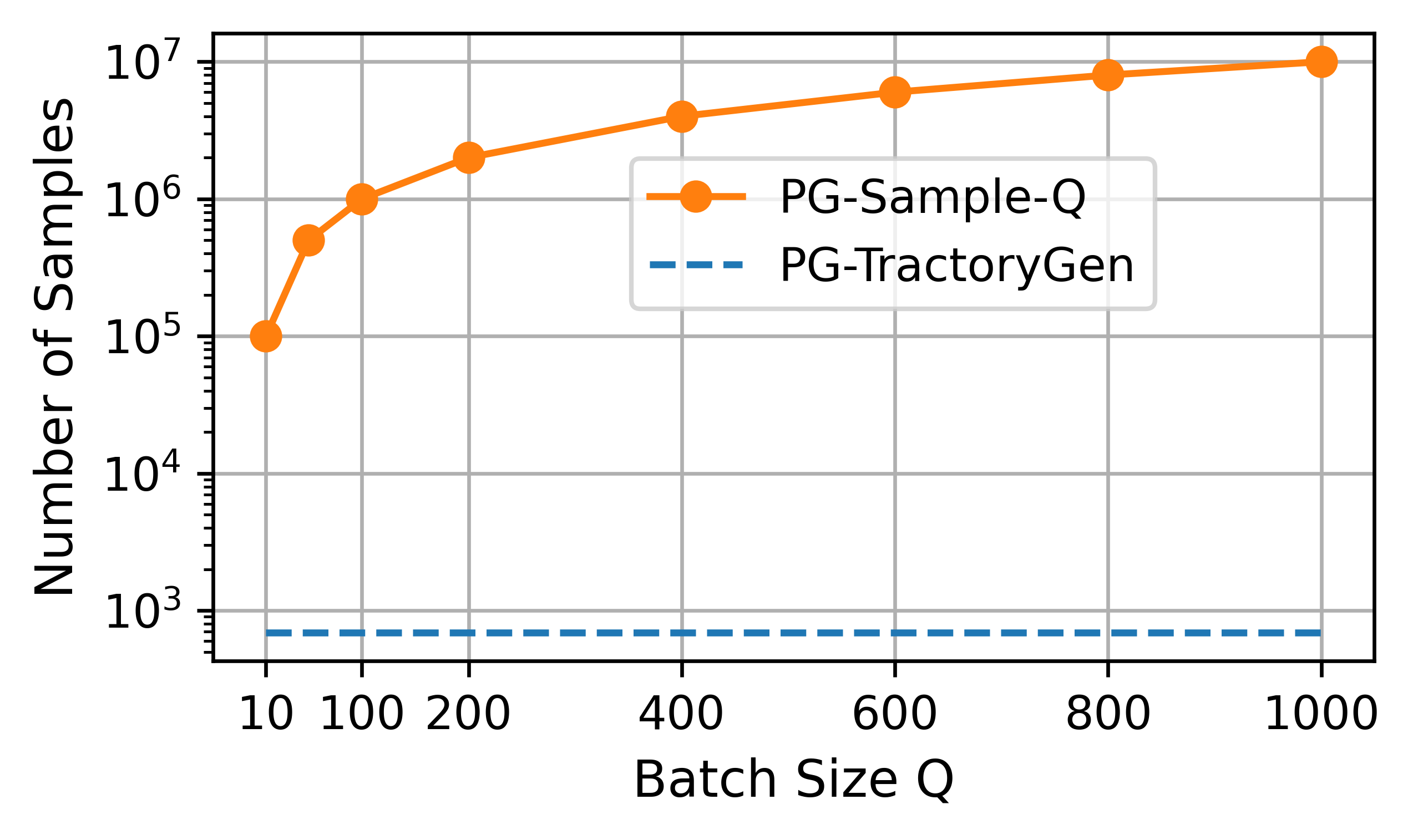}}
\caption{Performance of learning state-feedback controllers in power distribution network. PG-TrajectoryGen achieves the same training and testing loss as PG-Sample-1000 with much smaller number of samples on the system.}\label{fig:Loss_voltage_full}
\end{figure}


\end{document}